
\documentclass{article}

\usepackage{microtype}
\usepackage{graphicx}
\usepackage{subfig}
\usepackage{booktabs}

\usepackage{multirow}
\usepackage{amsmath}
\usepackage{amsthm}
\usepackage{amssymb}
\usepackage{empheq}
\usepackage[switch]{lineno}
\usepackage[hyphens]{url}
\usepackage{caption}

\newcommand{\eat}[1]{}

\newcommand{\D}{\mathcal{D}}
\newcommand{\dr}{$d_{\mathrm{rmax}}$}

\newtheorem{theorem}{Theorem}[section]
\newtheorem*{theorem*}{Theorem}

\newtheorem{lemma}[theorem]{Lemma}
\newtheorem*{lemma*}{Lemma}

\usepackage{hyperref}


\usepackage[noend]{algorithmic}

\newlength\myindent
\setlength\myindent{2em}


\usepackage[accepted]{icml2021}

\icmltitlerunning{Machine Unlearning for Random Forests}

\newcommand{\bigO}{\mathcal{O}}

\begin{document}

\twocolumn[
\icmltitle{Machine Unlearning for Random Forests}




\begin{icmlauthorlist}
\icmlauthor{Jonathan Brophy}{uo}
\icmlauthor{Daniel Lowd}{uo}
\end{icmlauthorlist}

\icmlaffiliation{uo}{Department of Computer and Information Science, University of Oregon, Eugene, Oregon}

\icmlcorrespondingauthor{Jonathan Brophy}{jbrophy@cs.uoregon.edu}

\icmlkeywords{Machine Learning, ICML, Machine Unlearning, Random Forests, Data Deletion, Privacy}

\vskip 0.3in
]



\printAffiliationsAndNotice{}  

\begin{abstract}
Responding to user data deletion requests, removing noisy examples, or deleting corrupted training data are just a few reasons for wanting to delete instances from a machine learning~(ML) model. However, efficiently removing this data from an ML model is generally difficult. In this paper, we introduce data removal-enabled (DaRE) forests, a variant of random forests that enables the removal of training data with minimal retraining. Model updates for each DaRE tree in the forest are exact, meaning that removing instances from a DaRE model yields exactly the same model as retraining from scratch on updated data.

DaRE trees use randomness and caching to make data deletion efficient. The upper levels of DaRE trees use random nodes, which choose split attributes and thresholds uniformly at random. These nodes rarely require updates because they only minimally depend on the data. At the lower levels, splits are chosen to greedily optimize a split criterion such as Gini index or mutual information. DaRE trees cache statistics at each node and training data at each leaf, so that only the necessary subtrees are updated as data is removed. For numerical attributes, greedy nodes optimize over a random subset of thresholds, so that they can maintain statistics while approximating the optimal threshold. By adjusting the number of thresholds considered for greedy nodes, and the number of random nodes, DaRE trees can trade off between more accurate predictions and more efficient updates.

In experiments on 13 real-world datasets and one synthetic dataset, we find DaRE forests delete data orders of magnitude faster than retraining from scratch while sacrificing little to no predictive power.
\end{abstract}

\section{Introduction}

Recent legislation \cite{gdpr,california_cpa,pipeda} requiring companies to remove private user data upon request has prompted new discussions on data privacy and ownership~\cite{shintre2019making}, and fulfilling this ``right to be forgotten''~\cite{kwak2017let,garg2020formalizing} may require updating any models trained on this data~\cite{villaronga2018humans}. However, retraining a model from scratch on a revised dataset becomes prohibitively expensive as dataset sizes and model complexities increase~\cite{shoeybi2019megatron}; the result is wasted time and computational resources, exacerbated as the frequency of data removal requests increases.

Decision trees and random forests~\cite{breiman1984classification,friedman2001greedy} are popular and widely used machine learning models~\cite{lundberg2018consistent}, mainly due to their predictive prowess on many classification and regression tasks~\cite{kocev2013tree,genuer2017random,wager2018estimation,linero2018bayesian,biau2019neural}. Current work on deleting data from machine learning models has focused mainly on recommender systems~\cite{cao2015towards,schelter20}, K-means~\cite{ginart2019making}, SVMs~\cite{cauwenberghs2001incremental}, logistic regression~\cite{guo2020certified,schelter20}, and deep neural networks~\cite{baumhauer2020machine,golatkar2020forgetting,wu2020deltagrad};
however, there is very limited work addressing the problem of efficient data deletion for tree-based models~\cite{schelter2021hedgecut}.
Thus, we outline our contributions as follows:
\begin{enumerate}

\item We introduce DaRE (\textbf{Da}ta \textbf{R}emoval-\textbf{E}nabled) Forests (a.k.a DaRE RF), a variant of random forests that supports the efficient removal of training instances. DaRE RF works with discrete tree structures, in contrast to many related works on efficient data deletion that assume continuous parameters. The key components of DaRE RF are to retrain subtrees only as needed, consider only a subset of valid thresholds per attribute at each decision node, and to strategically place completely random nodes near the top of each tree to avoid costly retraining.
    
\item We provide algorithms for training and subsequently removing data from a DaRE forest.
    
\item We evaluate DaRE RF's ability to efficiently perform sequences of deletions on 13 real-world binary classification datasets and one synthetic dataset, and find that DaRE RF can typically delete data 2-4 orders of magnitude faster than retraining from scratch while sacrificing less than 1\% in terms of predictive performance.


\end{enumerate}

\section{Problem Formulation}


We assume an instance space $\mathcal{X} \subseteq \mathbb{R}^p$ and possible labels $\mathcal{Y} = \{+1, -1\}$\footnote{Our methods can easily be generalized to the multi-class setting, $|\mathcal{Y}| = C$, by storing statistics for $C-1$ classes instead of just one.}. Let $\mathcal{D} = \{(x_i, y_i)\}_{i=1}^{n}$ be a training dataset in which each instance $x_i \in \mathcal{X}$ is a $p$-dimensional vector ($x_{i,j}$)$_{j=1}^{p}$ and $y_i \in \mathcal{Y}$. We refer to $P = \{j\}_{j=1}^{p}$ as the set of possible attributes. 


\subsection{Unlearning}

Our goal is to ``unlearn'' specific training examples by updating a trained model to completely remove their influence. We base our definition on prior work by~\citet[Def.~3.1]{ginart2019making}. We define a (possibly randomized) \emph{learning algorithm}, $\mathcal{A}: \D \rightarrow \mathcal{H}$, as a function from a dataset $\D$ to a model in hypothesis space $\mathcal{H}$.
A \emph{removal method}, $\mathcal{R}: \mathcal{A}(\D) \times \D \times (\mathcal{X} \times \mathcal {Y}) \rightarrow \mathcal{H}$, is a function from a model $\mathcal{A}(\D)$, dataset $\D$, and an instance to remove from the training data $(x, y)$ to a model in $\mathcal{H}$. For \emph{exact unlearning}~(a.k.a.\ \emph{perfect unlearning}), the removal method must be equivalent to applying the training algorithm to the dataset with instance $(x,y)$ removed. In the case of randomized training algorithms, we define equivalence as having identical probabilities for each model in $\mathcal{H}$:
\begin{equation}
\label{eq:exact_unlearning}
P(\mathcal{A}(\D \setminus (x,y))) = P(\mathcal{R}(\mathcal{A}(\D), \D, (x, y)))
\end{equation}
See \S\ref{sec:related_work} for more related work on unlearning.

The simplest approach to exact unlearning is to ignore the existing model and simply rerun $\mathcal{A}$ on the updated dataset, $\D \setminus (x,y)$. We refer to this as the \emph{naive retraining} approach. Naive retraining is agnostic to virtually all machine learning models, easy to understand, and easy to implement. However, this approach becomes prohibitively expensive as the dataset size, model complexity, and number of deletion requests increase.

\subsection{Random Forests}
A \emph{decision tree} is a tree-structured model in which each leaf is associated with a binary-valued prediction and each internal node is a decision node associated with an attribute $a \in P$ and threshold value $v \in \mathbb{R}$. The outgoing branches of the decision node partition the data based on the chosen attribute and threshold. Given $x \in \mathcal{X}$, the prediction of a decision tree can be found by traversing the tree, starting at the root and following the branches consistent with the attribute values in $x$. Traversal ends at one of the leaf nodes, where the prediction is equal to the value of the leaf node.

Decision trees are typically learned in a recursive manner, beginning by picking an attribute $a$ and threshold $v$ at the root that optimizes an empirical split criterion such as the Gini index~\cite{breiman1984classification}:
\begin{align}\label{eq:gini_index}
G_{D,\mathcal{Y}}(a, v) = \sum_{b \in \{\ell,r\}} \frac{|D_{b}|}{|D|}
             \Bigg(1 - \sum_{y \in \mathcal{Y}}
             \bigg(\frac{|D_{b,y}|}{|D_{b}|}\bigg)^2\Bigg)
\end{align}
or entropy~\cite{quinlan2014c4}:
\begin{equation}\label{eq:entropy}
H_{D,\mathcal{Y}}(a, v) = \sum_{b \in \{\ell,r\}} \frac{|D_{b}|}{|D|}
             \Bigg(\sum_{y \in \mathcal{Y}}
             -\frac{|D_{b,y}|}{|D_{b}|}
             \log_2 \frac{|D_{b,y}|}{|D_{b}|}
             \Bigg)
\end{equation}
in which $D \subseteq \D$ is the input dataset to a decision node, $D_{\ell} = \{(x_i, y_i) \in D \mid x_{i, a} \leq v\}$, $D_{r} = D \setminus D_{\ell}$, $D_{\ell, y} = \{(x_i, y_i) \in D_{\ell} \mid y_i=y\}$, and $D_{r, y} = \{(x_i, y_i) \in D_{r} \mid y_i=y\}$. Once $a$ and $v$ have been chosen for the root node, the data is partitioned into mutually exclusive subsets based on the value of $v$, and a child node is learned for each data subset. The process terminates when the entire subset has the same label or the tree reaches a specified maximum depth $d_{\max}$.

A {\em random forest}~(RF) is an ensemble of decision trees which predicts the mean value of its component trees. Two sources of randomness are used to increase diversity among the trees. First, each tree in the ensemble is trained from a bootstrap sample of the original training data, with some instances excluded and some included multiple times. Second, each decision node is restricted to a random subset of attributes, and the split criterion is optimized over this subset rather than over all attributes.

We base our methods on a minor variation of a standard RF, one that does not use bootstrapping. Bootstrapping complicates the removal of training instances, since one instance may appear multiple times in the training data for one tree. There is also empirical evidence that bootstrapping does not improve predictive performance~\cite{zaman2009effect,denil2014narrowing,mentch2016quantifying}, which was consistent with our own experiments~(Appendix: \S\ref{appendix_subsec:predictive_performance}, Table~\ref{tab:predictive_performance}). Since predictive performance was already similar, we saw no need to add the extra bookkeeping to handle this complexity.


\section{DaRE Forests}

We now describe DaRE (\textbf{Da}ta \textbf{R}emoval-\textbf{E}nabled) forests~(a.k.a. DaRE RF), an RF variant that enables the efficient removal of training instances.

\begin{theorem}\label{theorem:deletion_correctness}
Data deletion for DaRE forests is \emph{exact}~(see Eq.~\ref{eq:exact_unlearning}), meaning that removing instances from a DaRE model yields exactly the same model as retraining from scratch on updated data.
\end{theorem}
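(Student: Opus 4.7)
The plan is to reduce the statement to a per-tree claim and then argue by structural induction down each tree. Since the paper's DaRE forest is defined without bootstrapping, the trees are trained independently on the full dataset using independent sources of randomness, so it suffices to show that for each tree $T = \mathcal{A}(\D)$, the tree $\mathcal{R}(T, \D, (x,y))$ is equal in distribution to $\mathcal{A}(\D \setminus (x,y))$. Equation~\ref{eq:exact_unlearning} at the forest level then follows by taking the product of matching tree-level distributions.

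To make the equality of distributions rigorous, I would couple the random bits used at corresponding nodes of the retrained tree and the post-removal tree (the attribute subsamples at greedy nodes, and the attribute/threshold draws at random nodes). Under this coupling, the goal becomes pointwise equality of tree structures, which I would prove by induction from the root down. The inductive hypothesis at a node $v$ is that the cached sufficient statistics at $v$ after removal coincide exactly with the statistics that fresh training on $\D \setminus (x,y)$ would compute at $v$, and that the split stored at $v$ is the one fresh training would choose given those statistics.

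For the inductive step I would split on the two node types. For a \emph{random node}, the split attribute and threshold depend only on the coupled random bits, not on the data, so the split is unchanged; removal reduces to recursing into the single child whose routed data actually changes. For a \emph{greedy node}, the cached per-threshold class counts can be updated in $\bigO(1)$ per candidate by decrementing the bin that $(x,y)$ contributed to, which preserves the invariant that the stored statistics equal those of $\D \setminus (x,y)$ restricted to this node. If the argmax of the split criterion (Eq.~\ref{eq:gini_index} or~\ref{eq:entropy}) on the updated statistics still matches the cached split, we recurse; if it changes, the subtree is retrained from scratch on the updated local data, which is trivially identical to what $\mathcal{A}$ would produce there. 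At the leaves, predictions are a deterministic function of the cached label counts, so they also match.

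The main obstacle I anticipate is the randomized threshold sampling for numerical attributes at greedy nodes: the set of valid candidate thresholds is derived from the data at the node, so removing an instance could, in principle, change which thresholds are legal and thereby perturb the distribution even when the greedy optimum is unchanged. To obtain \emph{exact} distributional equivalence rather than mere approximation, the removal procedure must either (i) draw thresholds by a rule that is provably invariant under removal of $(x,y)$ whenever $(x,y)$ does not land in the sampled pool, or (ii) detect any change in the candidate pool and trigger a subtree retrain. Pinning down this invariant, and hence verifying that the coupled random bits reproduce exactly the candidate set that fresh training would see, is the technical crux; once it holds, exactness of the overall removal propagates by the induction above and lifts to the whole forest.
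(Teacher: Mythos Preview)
Your overall architecture---reduce to a single tree, then induct down the tree, handling random and greedy nodes separately---matches the paper. You also correctly flag the randomized threshold sampling at greedy nodes as the technical crux. However, there is a real gap in how you propose to close it.

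Your two proposed resolutions, (i) an invariance rule or (ii) ``detect any change in the candidate pool and trigger a subtree retrain,'' do not describe what the DaRE deletion procedure actually does. When a sampled threshold becomes invalid after removing $(x,y)$, the algorithm does \emph{not} retrain the subtree; it resamples \emph{only the invalidated thresholds} uniformly from the remaining valid, not-yet-chosen thresholds, and then recomputes split scores. The same resample-only-the-invalid-ones mechanism is used for the $\tilde p$ attributes at a greedy node. Exactness therefore hinges on a combinatorial fact you do not state: if one samples a size-$k$ set uniformly from $n$ valid thresholds, then after $m$ of the $n$ thresholds become invalid, replacing any invalid members of the chosen set by fresh uniform draws from the surviving, unchosen thresholds yields exactly the uniform distribution over size-$k$ subsets of the $n-m$ survivors, i.e.\ probability $1/\binom{n-m}{k}$ for every such set. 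This lemma (proved in the paper by a short counting argument, reducing $m>1$ to repeated application of the $m=1$ case) is the missing ingredient; without it your induction step at greedy nodes does not go through for the algorithm as written.

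Two smaller points. First, your claim that at a random node ``the split attribute and threshold depend only on the coupled random bits, not on the data'' is not quite right: the threshold is drawn uniformly from $[a_{\min},a_{\max})$, which is data-dependent. The deletion procedure keeps the threshold unless it falls outside the updated range (equivalently, one side becomes empty), in which case it resamples; exactness here is a rejection-sampling argument, not data-independence. Second, the coupling framing is awkward because the update path consumes \emph{extra} fresh randomness (for resampling) that has no counterpart on the fresh-training path, so ``couple the random bits at corresponding nodes'' cannot be made literal. The paper sidesteps this by arguing purely at the level of distributions over the sampled attribute/threshold sets at each node, which is cleaner here than a pointwise coupling.
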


This is also equivalent to certified removal~\cite{guo2020certified} with $\epsilon = 0$. Proofs to all theorems are in \S\ref{appendix_sec:algorithmic_details} of the Appendix.

A DaRE forest is a tree ensemble in which each tree is trained independently on a copy of the training data, considering a random subset of $\Tilde{p}$ attributes at each split to encourage diversity among the trees. In our experiments we use $\Tilde{p} = \lfloor \sqrt{p} \rfloor$. Since each tree is trained independently, we describe our methods in terms of training and updating a single tree; the extension to the ensemble is trivial.

DaRE forests leverage several techniques to make deletions efficient: (1) only retrain portions of the model where the structure must change to match the updated database; (2) consider at most $k$ randomly-selected thresholds per attribute; (3) introduce random nodes at the \emph{top} of each tree that minimally depend on the data and thus rarely need to be retrained. We present abridged versions for training and updating a DaRE tree in Algorithms~1 and 2, respectively, with full explanations below. Detailed pseudocode for both operations is in the Appendix,~\S\ref{appendix_subsec:pseduocode}.

\subsection{Retraining Minimal Subtrees}

We avoid unnecessary retraining by storing statistics at each node in the tree. For decision nodes, we store and update counts for the number of instances $|D|$ and positives instances $|D_{\cdot, 1}|$,
as well as $|D_{\ell}|$ and $|D_{\ell,1}|$ for a set of $k$ thresholds per attribute. This information is sufficient to recompute the split criterion of each threshold without iterating through the data.
For leaf nodes, we store and update $|D|$ and $|D_{\cdot, 1}|$, along with a list of training instances that end at that leaf. These statistics are initialized when training the tree for the first time~(Alg.~\ref{alg:training}). We find this additional overhead has a negligible effect on training time.

When deleting a training instance $(x, y) \in D$, these statistics are updated and used to check if a particular subtree needs retraining. Specifically, decision nodes affected by the deletion of $(x, y)$ update the statistics and recompute the split criterion for each attribute-threshold pair. If a different threshold obtains an improved split criterion over the currently chosen threshold, then we retrain the subtree rooted at this node. The training data for this subtree can be found by concatenating the instance lists from all leaf-node descendants. If no retraining occurs at any decision node and a leaf node is reached instead, its label counts and instance list are updated and the deletion operation is complete. See Alg.~\ref{alg:update} for pseudocode.

\begin{algorithm}[t]
\small
\caption{Building a DaRE tree / subtree.}
\label{alg:training}
\begin{algorithmic}[1]
    \STATE {\bfseries Input:} data $D$, depth $d$
    \IF{\text{stopping criteria reached}}
        \STATE{$node$ $\gets$ \textsc{LeafNode}()}
        \STATE{\text{save instance counts}($node$,  $D$)}
        \hfill{$\vartriangleright$~\text{$|D|$, $|D_{\cdot, 1}|$}}
        \STATE{\text{save leaf-instance pointers}($node$,  $D$)}
        \STATE{\text{compute leaf value}($node$)}
    \ELSE
        \IF{$d <$ \dr}
            \STATE{$node \gets$ \textsc{RandomNode}()}
            \STATE{\text{save instance counts}($node$,  $D$)}
            \hfill{$\vartriangleright$~\text{$|D|$, $|D_{\cdot, 1}|$}}
            \STATE{$a \gets$ \text{randomly sample attribute}($D$)}
            \STATE{$v \gets$ \text{randomly sample threshold $\in [a_{min}, a_{max})$}}
            \STATE{\text{save threshold statistics}($node$, $D$, $a$, $v$)}
        \ELSE
            \STATE{$node$ $\gets$ \textsc{GreedyNode}()}
            \STATE{save instance counts($node$,  $D$)}
            \hfill{$\vartriangleright$~\text{$|D|$, $|D_{\cdot, 1}|$}}
            \STATE{$A \gets$ \text{randomly sample $\tilde{p}$ attributes}($D$)}
            \FOR{$a \in$ $A$}
                \STATE{$C$ $\gets$ get valid thresholds($D$, $a$)}
                \STATE{$V$ $\gets$ randomly sample $k$ valid thresholds($C$)}
                \FOR{$v \in V$}
                    \STATE{save threshold statistics($node$, $D$, $a$, $v$)}
                \ENDFOR
            \ENDFOR
            \STATE{$scores \gets$ compute split scores($node$)}
            \STATE{\text{select optimal split}($node$, $scores$)}
        \ENDIF
        \STATE{$D.\ell, D.r \gets$ split on selected threshold($node$, $D$)}
        \STATE{$node.\ell$ = \textsc{Train}($D_{\ell}$, $d+1$)}
        \hfill{$\vartriangleright$~\text{Alg.~\ref{alg:training}}}
        \STATE{$node.r$ $\gets$ \textsc{Train}($D_{r}$, $d+1$)}
        \hfill{$\vartriangleright$~\text{Alg.~\ref{alg:training}}}
    \ENDIF
    \STATE {\bfseries Return} node
\end{algorithmic}
\end{algorithm}

\subsection{Sampling Valid Thresholds}

The optimal threshold for a continuous attribute will always lie between two training instances with adjacent feature values containing opposite labels; if the two training instances have the same label, the split criterion improves by increasing or decreasing $v$. We refer to these as \emph{valid} thresholds, and any other threshold as \emph{invalid}.
More precisely, a threshold $v$ between two adjacent values $v_1$ and $v_2$ for a given attribute $a$ is valid if and only if there exist instances $(x_1,y_1)$ and $(x_2,y_2)$ such that $x_{1,a} = v_1$, $x_{2,a} = v_2$, and $y_1 \neq y_2$.

Only considering valid thresholds substantially reduces the statistics we need to store and compute at each node. We gain further efficiency by randomly sampling $k$ valid thresholds and only considering these thresholds when deciding which attribute-threshold pair to split on. We treat $k$ as a hyperparameter and tune its value when building a DaRE model. One might suspect that only considering a subset of thresholds for each attribute may lead to decreased predictive performance; however, our experiments show that relatively modest values of $k$~(e.g. $5 \leq k \leq 25$) are sufficient to providing accurate predictions, and in some cases lead to improved performance~(Appendix: \S\ref{appendix_subsec:predictive_performance}, Table~\ref{tab:predictive_performance}).

When deleting an instance at a given node, we must determine if any threshold has become invalid. To accomplish this efficiently, at each node we also save and update the number of instances in which attribute $a$ equals $v_1$, the number in which $a$ equals $v_2$, and the number of positive instances matching each of those criteria. When an attribute threshold becomes invalid, we sort and iterate through the node data $D$, resampling the invalid threshold to obtain a new valid threshold.

\begin{algorithm}[tb!]
\small
\caption{Deleting a training instance from a DaRE tree.}\label{alg:update}
\begin{algorithmic}[1]
    \REQUIRE{Start at the root node.}
    \STATE {\bfseries Input:} $node$, depth $d$, instance to remove $(x, y)$.
    \STATE{\text{update instance counts}($node$, $(x, y)$)}
    \hfill{$\vartriangleright$~\text{$|D|$ and $|D_{\cdot, 1}|$}}
    \IF{$node$ is a \textsc{LeafNode}}
        \STATE{\text{remove $(x, y)$ from leaf-instance pointers($node$, $(x,y)$})}
        \STATE{\text{recompute leaf value}($node$)}
        \STATE{\text{remove $(x, y)$ from database and return}}
    \ELSE
        \STATE{\text{update decision node statistics}($node$, $(x, y)$)}
        \IF{$node$ is a \textsc{RandomNode}}
            \IF{$node$.selected threshold is invalid}
                \STATE{$D \gets$ get data from leaf instances($node$) $\setminus$ $(x, y)$}
                \IF{$node$.selected attribute ($a$) is not constant}
                    \STATE{$v \gets$ resample threshold $\in [a_{min}, a_{max})$}
                    \STATE{$D.\ell, D.r \gets$ split on new threshold($node$, $D$, $a$, $v$)}
                    \STATE{$node.\ell, r$ $\gets$ \textsc{Train}($D.\ell, d+1$), \textsc{Train}($D.r, d+1$)}
                \ELSE
                    \STATE{$node \gets$ \textsc{Train}($D$, $d$)}
                    \hfill{$\vartriangleright$~\text{Alg.~\ref{alg:training}}}
                \ENDIF
                \STATE{remove $(x, y)$ from database and return}
            \ENDIF
        \ELSE
            \IF{$\exists$ invalid attributes or thresholds}
                \STATE{$D \gets$ get data from leaf instances($node$) $\setminus$ $(x, y)$}
                \STATE{resample invalid attributes and thesholds($node$, $D$)}
            \ENDIF
            \STATE{$scores$ $\gets$ recompute split scores($node$)}
            \STATE{$a$, $v$ $\gets$ select optimal split($node$, $scores$)}
            \IF{optimal split has changed}
                \STATE{$D.\ell, D.r \gets$ split on new threshold($node$, $D$, $a$, $v$)}
                \STATE{$node.\ell, r$ $\gets$ \textsc{Train}($D.\ell, d+1$), \textsc{Train}($D.r, d+1$)}
                \STATE{remove $(x, y)$ from database and return}
            \ENDIF
        \ENDIF
        \IF{$x_{\cdot, a} \leq v$}
            \STATE{\textsc{Delete}($node.\ell$, $d+1$, $(x, y)$)}
            \hfill{$\vartriangleright$~\text{Alg.~\ref{alg:update}}}
        \ELSE
            \STATE{\textsc{Delete}($node.r$, $d+1$, $(x, y)$)}
            \hfill{$\vartriangleright$~\text{Alg.~\ref{alg:update}}}
        \ENDIF
    \ENDIF
\end{algorithmic}
\end{algorithm}

\subsection{Random Splits}


The third technique for efficient model updating is to choose the attribute and threshold for some of the decision nodes at random, independent of the split criterion. Specifically, given the data at a particular decision node $D \subseteq \D$, we sample an attribute $a \in P$ uniformly at random, and then sample a threshold $v$ in the range $[a_{\min}, a_{\max})$, the min.\ and max.\ values for $a$ in $D$. We henceforth refer to these decision nodes as ``random'' nodes, in contrast to the ``greedy'' decision nodes that optimize the split criterion. Random nodes store and update $|D_\ell|$ and $|D_r|$, statistics based on the sampled threshold, and retrain only if $|D_\ell|=0$ or $|D_r|=0$~(i.e. $v$ is no longer in the range $[a_{\min}, a_{\max})$); however, since random nodes minimally depend on the statistics of the data, they rarely need to be retrained. Random nodes are placed in the upper layers of the tree and greedy nodes are used for all other layers~(excluding leaf nodes). We introduce \dr{} as another hyperparameter indicating how many layers from the top the tree should use for random nodes~(e.g. the top two layers of the tree are all random nodes if \dr$=2$).

Intuitively, nodes near the top of the tree contain more instances than nodes near the bottom, making them more expensive to retrain if necessary. Thus, we can significantly increase deletion efficiency by replacing those nodes with random ones. We can also maintain comparable predictive performance to a model with no random nodes by using greedy nodes in all subsequent layers, resulting in a greedy model built on top of a random projection of the input space~\cite{haupt2006signal}.

In our experiments, we compare DaRE RF with random splits to those without, to evaluate the benefits of adding these random nodes. We refer to DaRE models with random nodes as random DaRE (R-DaRE) and those without as greedy DaRE (G-DaRE). G-DaRE RF can also be viewed as a special case of R-DaRE RF in which \dr$=0$.

\subsection{Complexity Analysis}

The time for training a DaRE forest is \emph{identical} to that of a standard RF:
\begin{theorem}\label{theorem:training_runtime}
Given $n = |\D|$, $T$, $d_{\max}$, and $\Tilde{p}$, the time complexity to train a DaRE forest is $\mathcal{O}(T\, \Tilde{p}\,n\,d_{\max})$.
\end{theorem}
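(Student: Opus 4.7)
The plan is a level-by-level accounting that exploits the disjoint-partition structure of decision trees. Because each of the $T$ trees in a DaRE forest is trained independently by a separate invocation of Alg.~\ref{alg:training} on its own copy of $\D$, the forest cost is exactly $T$ times the single-tree cost, so it suffices to show that one tree trains in $\mathcal{O}(\Tilde{p}\,n\,d_{\max})$ time.

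For one tree I would first bound the work done at an individual node with input dataset $D$. A leaf costs $\mathcal{O}(|D|)$ to save counts, instance pointers, and the leaf value. A random node ($d < d_{\mathrm{rmax}}$) samples a single attribute, makes one linear pass over $D$ to record its range and the required counts, samples a threshold, and partitions $D$ — all $\mathcal{O}(|D|)$. A greedy node samples $\Tilde{p}$ attributes and, for each, identifies the valid thresholds, subsamples a constant $k$ of them, and computes $|D_{\ell}|$ and $|D_{\ell,1}|$ for each; with $D$ sorted by that attribute, all $k$ cumulative counts can be obtained by one linear sweep, giving $\mathcal{O}(|D|)$ per attribute and $\mathcal{O}(\Tilde{p}\,|D|)$ per node. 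In every case the per-node work is at most $\mathcal{O}(\Tilde{p}\,|D|)$.

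I would then sum across nodes level by level. Since each decision node partitions its input disjointly into $D_{\ell}$ and $D_{r}$, the datasets at any fixed depth $d$ form a disjoint partition of a subset of $\D$, so $\sum_{\text{nodes at depth }d}|D|\le n$. The total work at depth $d$ is therefore $\mathcal{O}(\Tilde{p}\,n)$, and summing over the at most $d_{\max}$ depths yields $\mathcal{O}(\Tilde{p}\,n\,d_{\max})$ per tree, hence $\mathcal{O}(T\,\Tilde{p}\,n\,d_{\max})$ for the forest.

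The main obstacle is justifying the $\mathcal{O}(\Tilde{p}\,|D|)$ bound at greedy nodes: a naive implementation that re-sorts $D$ by each sampled attribute at every node would incur an extra $\log|D|$ factor. I would circumvent this by pre-sorting once per attribute at the root and threading the sorted indices down the recursion, contributing only a lower-order $\mathcal{O}(p\,n\,\log n)$ term that is absorbed whenever $\Tilde{p}\,d_{\max}\gtrsim\log n$. Combined with treating $k$ as a constant — which the theorem does implicitly by omitting it from its parameter list — this delivers exactly the stated bound.
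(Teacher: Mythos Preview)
Your proposal is correct and follows essentially the same level-by-level accounting as the paper: bound the per-node work by $\mathcal{O}(\Tilde{p}\,|D|)$, use the disjoint-partition property so that each depth costs $\mathcal{O}(\Tilde{p}\,n)$, sum over $d_{\max}$ depths, and multiply by $T$. Your treatment is in fact more careful than the paper's, which silently ignores the sorting needed to enumerate valid thresholds; your pre-sort-and-thread workaround and the caveat about the residual $\mathcal{O}(p\,n\,\log n)$ term are a genuine refinement rather than a departure.
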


The overhead of storing statistics and instance pointers is negligible compared to the cost of iterating through the entire dataset to score all attributes at each node. The key difference is in the deletion time, which can be much better depending on how much of each tree needs to be retrained:
\begin{theorem}\label{theorem:deletion_runtime}
Given $d_{\max}$, $\Tilde{p}$, and $k$, the time complexity to delete a single instance $(x,y) \in \D$ from a DaRE tree is $\mathcal{O}(\Tilde{p}\,k\,d_{\max})$, if the tree structure is unchanged and the attribute thresholds remain valid. If a node with $|D|$ instances has invalid attribute thresholds, then the additional time to choose new thresholds is $\mathcal{O}(|D| \log |D|)$. If a node with $|D|$ instances at level $d$ needs to be retrained, then the additional retraining time is $\mathcal{O}(\Tilde{p}\,|D|\,(d_{\max}-d)))$.
\end{theorem}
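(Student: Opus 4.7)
The plan is to decompose the theorem into its three claims and handle each by tracing through Algorithm~\ref{alg:update} step by step, bounding the work done per level of the tree.

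First, I would establish the base case: deletion with no structural change and no invalid thresholds. Algorithm~\ref{alg:update} descends exactly one root-to-leaf path, so at most $d_{\max}+1$ nodes are visited. At each visited node, the update to $|D|$ and $|D_{\cdot,1}|$ is $O(1)$. At a random node, checking whether $v$ is still in $[a_{\min}, a_{\max})$ and updating $|D_\ell|, |D_r|$ is $O(1)$. At a greedy node, we must update, for each of the $\tilde{p}$ sampled attributes and each of its $k$ sampled thresholds, the counts $|D_\ell|, |D_{\ell,1}|$ and the adjacency statistics at $v_1, v_2$; each of these updates is $O(1)$, and recomputing the split score from stored counts via Eq.~(\ref{eq:gini_index}) or~(\ref{eq:entropy}) is also $O(1)$ per threshold. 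Thus each greedy node costs $O(\tilde{p}\,k)$, the leaf update is $O(1)$ (assuming the leaf-instance pointer structure supports $O(1)$ deletion), and the total is $O(\tilde{p}\,k\,d_{\max})$.

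For the second claim, I would analyze the branch where some attribute threshold becomes invalid at an interior node with $|D|$ instances. Algorithm~\ref{alg:update} first reconstructs $D$ by concatenating the leaf instance lists in the subtree, which is $O(|D|)$ since each instance appears in exactly one leaf. Resampling then requires identifying the valid thresholds for the affected attribute, which by the definition of validity (adjacent values with opposite labels) is obtained by sorting $D$ on that attribute, an $O(|D| \log |D|)$ operation, followed by a single $O(|D|)$ pass to enumerate label-change boundaries and sample a replacement uniformly. This $O(|D| \log |D|)$ term dominates and adds on top of the base $O(\tilde{p}\,k\,d_{\max})$ cost.

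For the third claim, I would observe that if retraining is triggered at a node at level $d$ with $|D|$ instances, we again gather the subtree's training data in $O(|D|)$ time and then invoke Algorithm~\ref{alg:training} on $D$ with maximum remaining depth $d_{\max}-d$. Applying Theorem~\ref{theorem:training_runtime} to this single subtree (with $T=1$) yields $O(\tilde{p}\,|D|\,(d_{\max}-d))$, which subsumes the gathering cost. Combining the three claims gives the stated bound.

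The main obstacle is not any deep argument but rather the careful verification that every incremental count update in Algorithm~\ref{alg:update} is genuinely $O(1)$ and that the data structures storing leaf-instance pointers, threshold statistics, and the adjacency values $v_1, v_2$ support the $O(1)$ operations we claim. In particular, one must justify that the per-threshold cached statistics are sufficient to recompute the split criterion without re-scanning $D$, and that testing threshold validity uses only the stored adjacency counts rather than a fresh pass through the data; once this bookkeeping is pinned down, the complexity bounds follow directly from a per-level counting argument.
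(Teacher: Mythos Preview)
Your proposal is correct and follows essentially the same approach as the paper's proof: both decompose the theorem into the three claims, bound the per-node work along the root-to-leaf path for the first claim, argue that gathering leaf data plus sorting dominates the threshold-resampling cost for the second, and reduce the third claim to the training-time bound of Theorem~\ref{theorem:training_runtime} applied to a subtree of depth $d_{\max}-d$. The only minor difference is that the paper justifies the $O(|D|)$ subtree traversal by noting the number of leaves is bounded by the number of instances, whereas you appeal directly to each instance appearing in exactly one leaf; both arguments yield the same bound.
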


When the structure is unchanged, this is much more efficient than naive retraining, especially if the number of thresholds considered ($k$) is much smaller than $n$. In the worst case, if the split changes at the root of every tree, then deletion in a DaRE forest is no better than naive retraining. In practice, this is very unlikely, since different trees in the forest consider different sets of $\Tilde{p}$ attributes at the root, and the difference between the best and second-best attribute-threshold pairs is usually bigger than a single instance. 

Choosing new thresholds also requires iterating through the training instances at a node. Thresholds only become invalid when an instance adjacent to the threshold is removed, so this is an infrequent event when $k$ is much smaller than $n$. To analyze this empirically, we evaluate our methods with both random and adversarially chosen deletions, approximating the average- and worst-case, respectively.

The main storage costs for a DaRE forest come from storing sets of attribute-threshold statistics at each greedy node, and the instance lists for the leaf nodes.

\begin{theorem}\label{theorem:space_complexity}
Given $n=|\D|$, $d_{\max}$, $k$, $T$, and $\Tilde{p}$, the space complexity of a DaRE forest is $\mathcal{O}(k\,\Tilde{p}\,2^{d_{\max}}\,T + n\,T$).
\end{theorem}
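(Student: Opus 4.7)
The plan is to bound the storage per tree by splitting it into two contributions: the per-node overhead (decision-node statistics) and the instance-pointer overhead at the leaves. Since the forest contains $T$ independently trained trees, the final bound follows by multiplying the per-tree bound by $T$.

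First I would bound the total number of nodes in a single DaRE tree. Because training halts once depth $d_{\max}$ is reached (Alg.~\ref{alg:training}), the tree is a binary tree of depth at most $d_{\max}$, which has at most $2^{d_{\max}+1}-1 = \mathcal{O}(2^{d_{\max}})$ nodes in total. Next, I would bound the storage attached to each node by type. At a random node, only a constant number of quantities are saved ($|D|$, $|D_{\cdot,1}|$, the selected attribute $a$, threshold $v$, and the $|D_{\ell}|$, $|D_{r}|$ counts used to check validity), giving $\mathcal{O}(1)$ per node. At a greedy node, statistics are stored for up to $\tilde{p}$ sampled attributes, each with up to $k$ sampled thresholds, and each threshold maintains a constant number of counts (left/right counts and positive counts, plus the adjacent-value bookkeeping needed to detect invalidity), giving $\mathcal{O}(k\,\tilde{p})$ per node. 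At a leaf node, the constant counts $|D|$, $|D_{\cdot,1}|$ contribute $\mathcal{O}(1)$, while the pointer list of training instances ending at that leaf contributes $\mathcal{O}(|D|)$.

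Summing the per-node decision-node statistics across all $\mathcal{O}(2^{d_{\max}})$ nodes gives at most $\mathcal{O}(k\,\tilde{p}\,2^{d_{\max}})$ per tree, since greedy nodes dominate the random and leaf nodes' constant contributions. For the leaf instance pointers, I would use the key observation that each decision node partitions its data into two disjoint subsets, so by induction the leaves of a single tree induce a partition of the original training set $\mathcal{D}$. Hence the sum of $|D|$ over all leaves is exactly $n$, contributing $\mathcal{O}(n)$ pointers per tree. Adding the two contributions yields $\mathcal{O}(k\,\tilde{p}\,2^{d_{\max}} + n)$ per tree, and multiplying by $T$ gives the claimed $\mathcal{O}(k\,\tilde{p}\,2^{d_{\max}} T + n\,T)$.

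The argument is mostly a careful accounting exercise; the one step that requires the most care is verifying that the leaves of each tree truly partition $\mathcal{D}$ so that the instance-pointer sum collapses to exactly $n$ rather than something like $n\,d_{\max}$. This follows because both random and greedy decision nodes route every instance to exactly one child based on a single threshold test, and no bootstrapping is used (so each training instance appears once per tree). Everything else (bounding the node count by $2^{d_{\max}}$, and bounding the per-greedy-node statistics by $\mathcal{O}(k\,\tilde{p})$) reads directly off the training pseudocode in Alg.~\ref{alg:training}.
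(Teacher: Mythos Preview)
Your proposal is correct and follows essentially the same approach as the paper: bound the per-node statistics by $\mathcal{O}(k\,\tilde{p})$, multiply by the $\mathcal{O}(2^{d_{\max}})$ nodes in a tree, add the $\mathcal{O}(n)$ leaf instance pointers (using that the leaves partition $\mathcal{D}$), and multiply by $T$. Your write-up is in fact more careful than the paper's, explicitly distinguishing random from greedy nodes and justifying the leaf-partition step via the absence of bootstrapping.
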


In our experiments, we analyze the space overhead of a DARE forest by measuring its memory consumption as compared to a standard RF, quantifying the time/space trade-off introduced by DARE RF to enable efficient data deletion.

\section{Experimental Evaluation}

\paragraph{Research Questions}
Can we use G-DaRE RF to efficiently delete a significant number of instances as compared to naive retraining~(\textbf{RQ1})? Can we use R-DaRE RF to further increase deletion efficiency 
while maintaining comparable predictive performance~(\textbf{RQ2})? 

\paragraph{Datasets}

We conduct our experiments on 13 publicly-available datasets that represent problems well-suited for tree-based models, and one synthetic dataset we call Synthetic. For each dataset, we generate one-hot encodings for any categorical variable and leave all numeric and binary variables as is. For any dataset without a designated train and test split, we randomly sample 80\% of the data for training and use the rest for testing. A summary of the datasets is in Table~\ref{tab:dataset_summary}, and additional dataset details are in the Appendix: \S\ref{sec:datasets_appendix}.

\paragraph{Hyperparameter Tuning}

Due to the range of label imbalances in our datasets~(Table~\ref{tab:dataset_summary} and Appendix: \S\ref{sec:datasets_appendix}, Table~\ref{tab:datasets}) we measure the predictive peformance of our models using average precision (AP)~\cite{zhu2004department} for datasets with a positive label percentage $<$ 1\%, AUC~\cite{hanley1982meaning} for datasets between [1\%, 20\%], and accuracy~(acc.) for the remaining datasets. Using these metrics and Gini index as the split criterion, we tune the following hyperparameters: the maximum depth of each tree~$d_{\max}$, the number of trees in the forest~$T$, and the number of thresholds considered per attribute for greedy nodes~$k$. Our protocol for tuning \dr\ is as follows: first, we tune a greedy model~(i.e. by keeping $d_{\mathrm{rmax}}=0$ fixed) using 5-fold cross-validation. Once the optimal values for $d_{\max}$, $T$, and $k$ are found, we tune \dr\ by incrementing its value from zero to $d_{\max}$, stopping when the model's cross-validation score exceeds a specified error tolerance as compared to the greedy model; for these experiments, we tune \dr\ using absolute error tolerances of 0.1\%, 0.25\%, 0.5\%, and 1.0\%. Selected hyperparameter values are in the Appendix: \S\ref{appendix_subsec:predictive_performance}, Table~\ref{tab:dart_hyperparameters}.

\subsection{Methodology}

\begin{table}[t]
\centering
\caption{Dataset Summary. $n$ = no.\ instances, $p$ = no.\ attributes, Pos. \% = positive label percentage, Met. = predictive performance metric.}
\vskip 0.15in
\label{tab:dataset_summary}
\begin{tabular}{lrrrr}
\toprule
\textbf{Dataset} & \textbf{$n$} & \textbf{$p$} & \textbf{Pos. \%} & \textbf{Met.} \\
\midrule
Surgical       & 14,635     & 90    & 25.2\% & Acc. \\
Vaccine        & 26,707     & 185   & 46.4\% & Acc. \\
Adult          & 48,842     & 107   & 23.9\% & Acc. \\
Bank Mktg.     & 41,188     & 63    & 11.3\% & AUC  \\
Flight Delays  & 100,000    & 648   & 19.0\% & AUC  \\
Diabetes       & 101,766    & 253   & 46.1\% & Acc. \\
No Show        & 110,527    & 99    & 20.2\% & AUC  \\
Olympics       & 206,165    & 1,004 & 14.6\% & AUC  \\
Census         & 299,285    & 408   &  6.2\% & AUC  \\
Credit Card    & 284,807    & 29    &  0.2\% & AP   \\
CTR            & 1,000,000  & 13    &  2.9\% & AUC  \\
Twitter        & 1,000,000  & 15    & 17.0\% & AUC  \\
Synthetic      & 1,000,000  & 40    & 50.0\% & Acc. \\
Higgs          & 11,000,000 & 28    & 53.0\% & Acc. \\
\bottomrule
\end{tabular}
\end{table}

\begin{figure*}[t]
\centering
\includegraphics[width=1.0\textwidth]{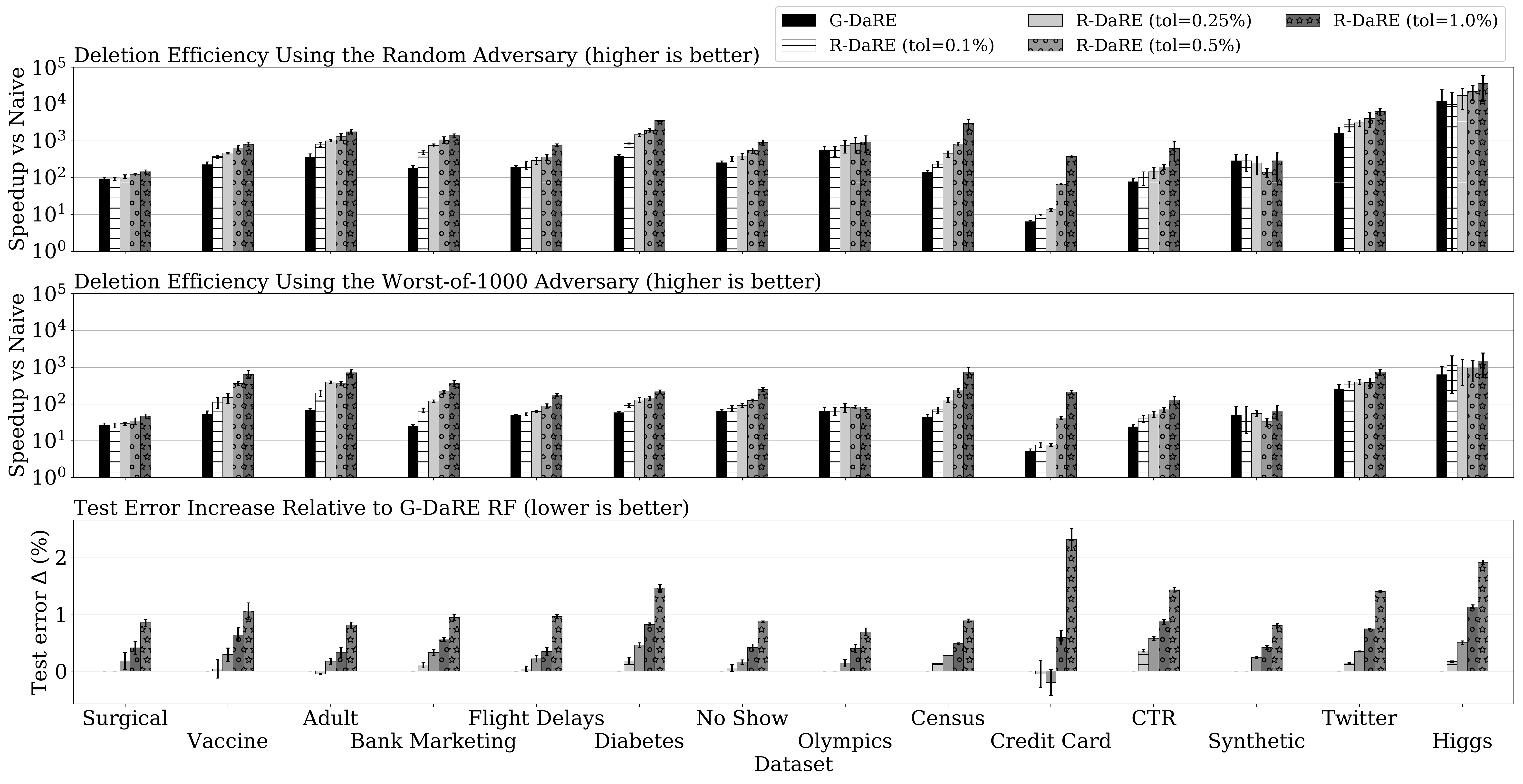}
\vskip -0.1in
\caption{Deletion efficiency of DaRE RF. \emph{Top \& Middle}: Number of instances deleted in the time it takes the naive retraining approach to delete one instance using the random and worst-of-1000 adversaries, respectively~(error bars represent standard deviation). \emph{Bottom}: The increase in test error when using R-DaRE RF relative to the predictive performance of G-DaRE RF~(error bars represent standard error).}
\label{fig:deletion}
\end{figure*}

\begin{figure*}
\centering
\includegraphics[width=1.0\textwidth]{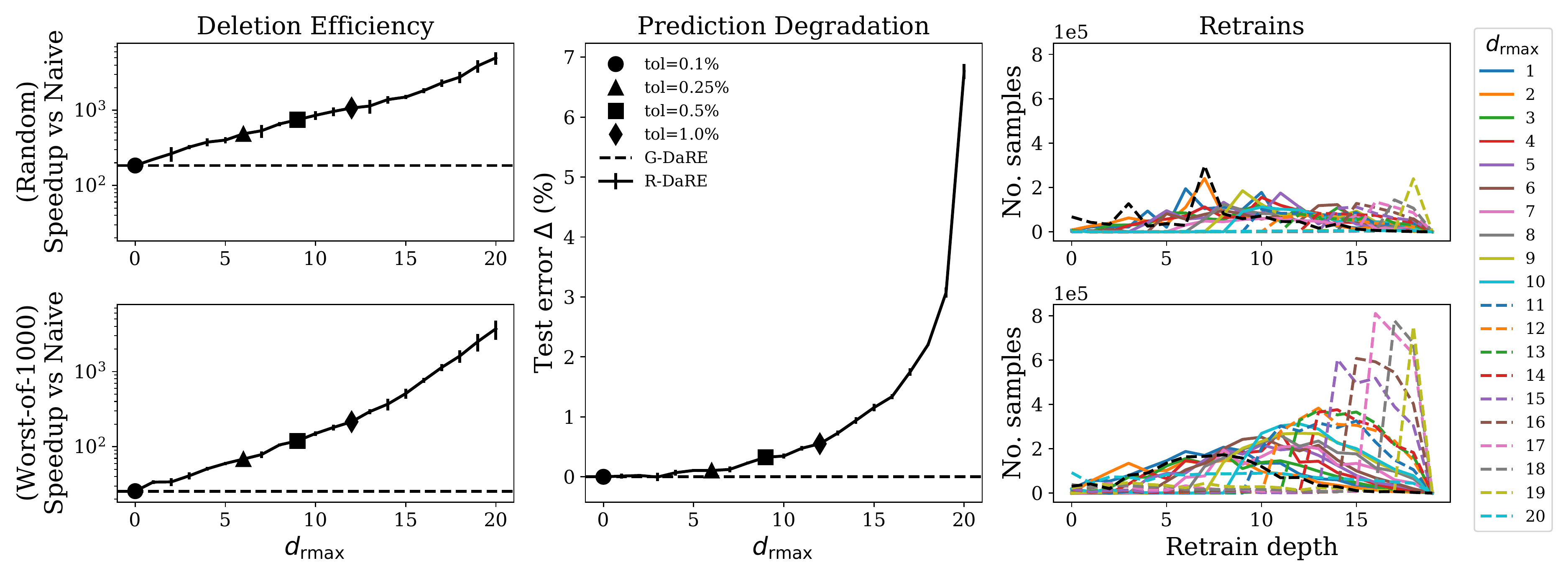}
\vskip -0.15in
\caption{Effect of increasing \dr on deletion efficiency~(left), predictive performance~(middle), and the cost of retraining~(right) using the random~(top) and worst-of-1000~(bottom) adversaries for the Bank Marketing dataset. The predictive performance is independent of the adversary, as performance is measured before any deletions occur. Error bars represent standard deviation and standard error for the left and middle plots, respectively. In short, we see that increasing \dr\ increases deletion efficiency but initially gradually degrades predictive performance. Similar analysis for other datasets are in the Appendix: \S\ref{appendix_subsec:d_rmax}.}
\label{fig:d_rmax}
\end{figure*}

We measure relative efficiency or speedup as the number of instances a DaRE model deletes in the time it takes the naive retraining approach to delete one instance~(i.e. retrain without that instance); the number of instances deleted gives us the speedup over the the naive approach.\footnote{System hardware specifications are in the Appendix:~\S\ref{appendix_sec:experiment_details}.} We also measure the predictive performance of R-DaRE RF prior to deletion and compare its predictive performance to that of G-DaRE RF. Each experiment is repeated five times.

We determine the order of deletions using two different adversaries: \emph{Random} and \emph{Worst-of-1000}. The random adversary selects training instances to be deleted uniformly at random, while the worst-of-1000 adversary selects each instance by first selecting 1,000 candidate instances uniformly at random, and then choosing the instance that results in the most retraining, as measured by the total number of instances assigned to all retrained nodes across all trees.

\subsection{Deletion Efficiency Results}

\paragraph{Random Adversary} We present the results of the deletion experiments using the random adversary in Figure~\ref{fig:deletion}~(top). We find that G-DaRE RF is usually at least two orders of magnitude faster than naive retraining, while R-DaRE RF is faster than G-DaRE RF to a varying degree depending on the dataset and error tolerance. R-DaRE RF is also able to maintain comparable predictive performance to G-DaRE RF, typically staying within a test error difference of 1\% depending on which tolerance is used to tune \dr~(Figure~\ref{fig:deletion}: bottom).


As an example of DaRE RF's utility,
naive retraining took 1.3 hours to delete a single instance for the Higgs dataset. R-DaRE RF ($tol=0.25$\% resulting in \dr$=3$) deleted over 17,000 instances in that time, an average of 0.283s per deletion, while the average test set error increased by only $0.5$\%. In this case, R-DaRE RF provides a speedup of over four orders of magnitude, providing a tractable solution for something previously intractable.



\paragraph{Worst-of-1000 Adversary} Against the more challenging worst-of-1000 adversary (Figure~\ref{fig:deletion}~(middle)), the speedup over naive deletion remains large, but is often an order of magnitude smaller. While R-DaRE models also decrease in efficiency, they maintain a significant advantage over G-DaRE RF, showing very similar trends of increased relative efficiency as when using the random adversary.

\begin{table}[t]
\centering
\caption{Summary of the deletion efficiency results. Specifically, the minimum, maximum, and geometric mean~(G. mean) of the speedup vs.\ the naive retraining method across all datasets.}
\vskip 0.15in
\label{tab:deletion_efficiency_summary}
\begin{tabular}{lrrr}
\toprule
\textbf{Model} & \textbf{Min.} & \textbf{Max.} & \textbf{G.\ Mean} \\ \midrule
\multicolumn{4}{l}{\textbf{Random Adversary}} \\
G-DaRE                & 6x   & 12,232x  & 257x   \\
R-DaRE (tol=0.1\%)    & 10x  &  9,735x  & 366x   \\
R-DaRE (tol=0.25\%)   & 13x  & 17,044x  & 494x   \\
R-DaRE (tol=0.5\%)    & 68x  & 22,011x  & 681x   \\
R-DaRE (tol=1.0\%)    & 145x & 35,856x  & 1,272x \\
\midrule
\multicolumn{4}{l}{\textbf{Worst-of-1000 Adversary}} \\
G-DaRE               & 5x  &   626x & 52x  \\
R-DaRE (tol=0.1\%)   & 8x  & 1,106x & 79x  \\
R-DaRE (tol=0.25\%)  & 8x  &   961x & 102x \\
R-DaRE (tol=0.5\%)   & 33x &   950x & 139x \\
R-DaRE (tol=1.0\%)   & 47x & 1,476x & 263x \\ \bottomrule
\end{tabular}
\end{table}

\paragraph{Summary} A summary of the deletion efficiency results is in Table~\ref{tab:deletion_efficiency_summary}. When instances to delete are chosen randomly, G-DaRE RF is more than 250x faster than naively retraining after every deletion (taking the geometric mean over the 14 datasets). By adding randomness, R-DaRE models achieve even larger speedups, from 360x to over 1,200x, depending on the predictive performance tolerance~(0.1\% to 1.0\%). The more sophisticated worst-of-1000 adversary can force more costly retraining. In this case, G-DaRE RF is more than 50x faster than naive retraining, and R-DaRE RF ranges from 80x to 260x depending on the tolerance.

\subsection{Effect of \dr and $k$ on Deletion Efficiency}

\begin{figure}[t]
\centering
\includegraphics[width=0.48\textwidth]{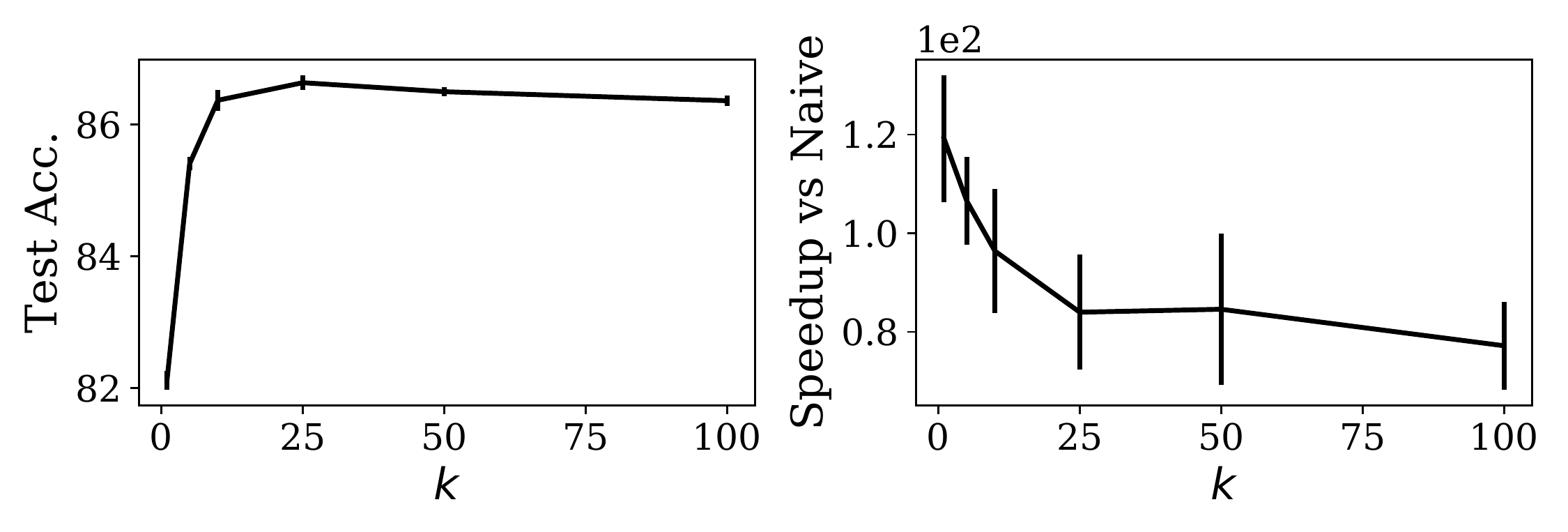}
\vskip -0.15in
\caption{Effect of increasing $k$ on predictive performance~(left) and deletion efficiency~(right) for the Surgical dataset using the random adversary; \dr\ is held fixed at 0. Error bars represent standard error and standard deviation for the left and right plots, respectively. Analysis for other datasets is in the Appendix: \S\ref{appendix_subsec:k}.}
\label{fig:k}
\end{figure}

Figure~\ref{fig:d_rmax} details the effect \dr~has on deletion efficiency under each adversary for the Bank Marketing dataset\footnote{Other datasets show similar trends; see the Appendix: \S\ref{appendix_subsec:d_rmax}.}.
As expected, we see that deletion efficiency increases as \dr~increases. Predictive performance degrades as \dr~increases, but initially degrades gracefully, maintaining a low increase in test error even as the top ten layers of each tree are replaced with random nodes~(+0.346\% test error).

Figure~\ref{fig:d_rmax} also shows the number of instances retrained at each depth, across all trees in the model. We immediately notice the increase in retraining cost when switching from the random~(top-right plot) to the worst-of-1000~(bottom-right plot) adversary, especially at larger depths. This matches our intuition since nodes deeper in the tree have fewer instances; each instance thus has a larger influence on the resulting split criterion over all attributes at a given node and increases the likelihood that a chosen attribute may change, resulting in more subtree retraining.

Figure~\ref{fig:k} shows the effect increasing $k$ has on predictive performance and deletion efficiency for the Surgical dataset\footnote{Other datasets show similar trends; see the Appendix: \S\ref{appendix_subsec:k}.}. In general, we find $k$ introduces a trade-off between predictive performance and deletion efficiency. However, our experiments show that modest values of $k$ can achieve competitive predictive performance while maintaining a high degree of deletion efficiency and incurring low storage costs.

\subsection{Space Overhead}

\begin{table*}[t]
\centering
\caption{Memory usage (in megabytes) for the training data, G-DARE RF, and an SKLearn RF trained using the same values of $T$ and $d_{max}$ as G-DARE RF. The total memory usage for the G-DARE RF model is broken down into: 1) the structure of the model needed for making predictions~(Structure); 2) the additional statistics stored at each decision node~(Decision Stats.); and 3) the additional statistics and training-instance pointers stored at each leaf node~(Leaf Stats.). The space overhead for G-DARE RF to enable efficient data deletion is measured as a ratio of the total memory usage of (data + G-DARE RF) to (data + SKLearn RF). Results are averaged over five runs and the standard error is shown in parentheses.}
\vskip 0.15in
\label{tab:space_overhead}
\begin{tabular}{lrrrrrrr}
\toprule
& & \multicolumn{4}{c}{\textbf{G-DARE RF}} & \\
\cmidrule(lr){3-6}
\textbf{Dataset} & \textbf{Data} & Structure &
Decision Stats. & Leaf Stats. &
\textbf{Total} & \textbf{SKLearn RF} &
\textbf{Overhead} \\ \midrule
Surgical           & 4     & 15  (0)    & 388    (\hphantom{00}1) & 14    (0)   & 417    (\hphantom{00}1)    & 31    (0) &  12.0x \\
Vaccine            & 16    & 18  (0)    & 426    (\hphantom{00}1) & 14    (0)   & 458    (\hphantom{00}2)    & 37    (0) & 8.9x \\
Adult              & 14    & 9   (0)    & 227    (\hphantom{00}1) & 16    (0)   & 252    (\hphantom{00}1)    & 18    (0) & 8.3x \\
Bank Marketing     & 8     & 23  (0)    & 455    (\hphantom{00}2) & 33    (0)   & 511    (\hphantom{00}2)    & 51    (0) & 8.8x \\
Flight Delays      & 207   & 37  (0)    & 3,030  (\hphantom{00}4) & 171   (0)   & 3,238  (\hphantom{00}5)    & 66    (0) & 12.6x \\
Diabetes           & 83    & 125 (0)    & 4,968  (\hphantom{0}12) & 199   (0)   & 5,292  (\hphantom{0}12)    & 257   (1) & 15.8x \\
No Show            & 35    & 91  (0)    & 2,511  (\hphantom{00}5) & 203   (0)   & 2,805  (\hphantom{00}5)    & 187   (1) & 12.8x \\
Olympics           & 663   & 27  (0)    & 3,196  (\hphantom{0}22) & 338   (0)   & 3,561  (\hphantom{0}23)    & 57    (0) & 5.9x \\
Census             & 326   & 33  (0)    & 1,737  (\hphantom{0}14) & 169   (0)   & 1,939  (\hphantom{0}14)    & 63    (0) & 5.8x \\
Credit Card        & 27    & 5   (0)    & 105    (\hphantom{00}1) & 457   (0)   & 567    (\hphantom{00}0)    & 7     (0) & 17.5x \\
CTR                & 45    & 6   (0)    & 485    (\hphantom{00}2) & 642   (0)   & 1,133  (\hphantom{00}0)    & 10    (0) & 21.4x \\
Twitter            & 48    & 186 (1)    & 2,450  (\hphantom{0}11) & 693   (0)   & 3,329  (\hphantom{0}12)    & 332   (0) & 8.9x \\
Synthetic          & 131   & 128 (1)    & 5,661  (\hphantom{0}36) & 357   (0)   & 6,146  (\hphantom{0}37)    & 114   (1) & 25.6x \\
Higgs              & 1,021 & 935 (4)    & 39,416 (168)            & 3,787 (1)   & 44,138 (173)               & 1,325 (9) & 19.3x \\
\bottomrule
\end{tabular}
\end{table*}

This section shows the space overhead of DARE forests by breaking the memory usage of G-DARE RF into three constituent parts: 1) the structure of the model that is needed for making predictions, 2) the additional statistics stored at each decision node, and 3) the additional statistics and training-instance pointers stored at each leaf node. Parts 2) and 3), plus the size of the data, constitute the space needed by G-DARE RF to enable efficient data removal.

Table~\ref{tab:space_overhead} shows the space overhead of G-DARE RF after training. We also show the training set size for each dataset, and the total memory usage of an SKLearn RF model using the same values for $T$ and $d_{max}$ as G-DARE RF.

As expected, decision-node statistics often make up the bulk of the space overhead for G-DARE RF; two exceptions are the Credit Card and CTR datasets, in which the size of the training-instance pointers outweigh the relatively low number of decision nodes~(an average of 238 and 726 per tree, respectively) for those models. The total memory usage of the G-DARE RF \emph{model} is 10-113x larger than that of the SKLearn RF model. However, since both approaches require the training data to enable deletions~(G-DARE RF may need to retrain subtrees; SKLearn RF needs to retrain using the naive approach), the relative overhead of G-DARE RF is the ratio of (data + G-DARE RF) to (data + SKLearn RF); this results in an overhead of 6--26x, quantifying the time/space trade-off for efficient data deletion.

\section{Related Work}\label{sec:related_work}

\paragraph{Exact Unlearning} There are a number of works that support exact unlearning of SVMs~\cite{cauwenberghs2001incremental,tveit2003incremental,duan2007decremental,romero2007incremental,karasuyama2009multiple,chen2019novel} in which the original goal was to accelerate leave-one-out cross-validation~\cite{shao1993linear}. More recently, \citet{cao2015towards} developed deletion mechanisms for several models that fall under the umbrella of non-adaptive SQ-learning~\cite{kearns1998efficient} in which data deletion is efficient and exact~(e.g. naive Bayes, item-item recommendation, etc.); \citet{schelter20} has also developed decremental update procedures for similar classes of models. \citet{ginart2019making} introduced a quantized variant of the $k$-means algorithm~\cite{lloyd1982least} called Q-$k$-means that supports exact data deletion. \citet{bourtoule2019machine} and \citet{aldaghri2020coded} propose training an ensemble of deep learning models on disjoint ``shards'' of a dataset, saving a snapshot of each model for every data point; the biggest drawbacks are the large storage costs, applicability only to \emph{iterative} learning algorithms, and the significant degradation of predictive performance. \citet{schelter2021hedgecut} enable efficient data removal for extremely randomized trees~(ERTs)~\cite{geurts2006extremely} without needing to save the training data by precomputing alternative subtrees for splits sensitive to deletions; aside from only being applicable to ERTs, they assume a very small percentage of instances can be deleted.

\paragraph{Approximate Unlearning}
In contrast to exact unlearning, a promising definition of approximate unlearning~(a.k.a statistical unlearning) guarantees $\forall \mathcal{S} \subseteq \mathcal{H}, \D, (x,y) \in \D: e^{-\epsilon} \leq P(\mathcal{R}(\mathcal{A}(\D), \D, (x, y)) \in S) \ / \ P(\mathcal{A}(\D \setminus (x, y)) \in S) \leq e^{\epsilon}$~($\epsilon$-certified removal: \citet{guo2020certified}, Eq. 1). \citet{golatkar_2020_CVPR,golatkar2020forgetting} propose a scrubbing mechanism for deep neural networks that does not require any retraining; however, the computational complexity of their approach is currently quite high. \citet{guo2020certified}, \citet{izzo2020approximate}, and \citet{wu2020deltagrad} propose different removal mechanisms for linear and logistic regression models that can be applied to the last fully connected layer of a deep neural network. \citet{golatkar2020mixed} perform unlearning on a linear approximation of large-scale vision networks in a mixed-privacy setting. \citet{fu2021bayesian} propose an unlearning procedure for models in a Bayesian setting using variational inference.

\paragraph{Mitigation}
Although not specifically designed as unlearning techniques, the following works propose different mechanisms for mitigating the impact of noisy, poisoned, or non-private training data. \citet{baumhauer2020machine} propose an output filtering technique that prevents private data from being leaked; however, their approach does not update the model itself, potentially leaking information if the model were still accessible. \citet{wang2019neural} and \citet{du2019lifelong} fine-tune their models on corrected versions of poisoned or corrupted training instances to mitigate backdoor attacks~\cite{gubadnets} on image classifiers and anomaly detectors, respectively. Although both approaches show promising empirical performance, they provide no guarantees about the extent to which these problematic training instances are removed from the model~\cite{sommer2020towards}. \citet{tople2019analyzing} analyze privacy leakage in language model snapshots before and after they are updated.

\paragraph{Differential Privacy}

Differential privacy~(DP)~\cite{dwork2006differential,JMLR:v12:chaudhuri11a,abadi} is a sufficient condition for approximate unlearning (in the case of a single deletion, sequential deletions may require using group DP~\cite{dwork2014algorithmic}), but it is an unnecessary and overly strict one since machine unlearning does not require instances to be private~\cite{guo2020certified}. Furthermore, differentially-private random forest models often suffer from poor predictive performance~\cite{fletcher2015differentially,fletcher2019decision}; this is because the privacy budget~(typically denoted $\epsilon$ or $\beta$) must be split among all the trees in the forest, and among the different layers in each tree. This typically results in a meaningless privacy budget~(e.g. $\epsilon > 10$)~\cite{fletcher2019decision}, a relaxed definition of DP~\cite{rana2015differentially}, extremely randomized trees~\cite{geurts2006extremely,fletcher2017differentially}, or very small forests~(e.g. $T=10$)~\cite{consul2020differentially}.

\section{Discussion}

Since data deletions in DaRE models are exact, membership inference attacks~\cite{yeom2018privacy,carlini2018secret} are guaranteed to be unsuccessful for instances deleted from the model. DaRE models also reduce the need for deletion verification methods~\cite{shintre2019verifying,sommer2020towards}. However, one must be aware that DaRE models (as well as any unlearning method) can leak which instances are deleted if an adversary has access to the model before \emph{and} after the deletion~\cite{chen2020machine}. Although privacy is a strong motivator for this work, there are a number of other useful applications for DaRE forests.

\paragraph{Instance-Based Interpretability}
A popular form of interpretability looks at how much each training instance contributes to a given prediction~\cite{koh2017understanding,yeh2018representer,pmlr-v80-sharchilev18a,pruthi2020estimating,chen2021hydra}. The naive approach to this task involves leave-one-out retraining for every training instance in order to analyze the effect each instance has on a target prediction, but this is typically intractable for most machine learning models and datasets. DaRE models can more efficiently compute the same training-instance attributions as the naive approach, making leave-one-out retraining a potentially viable option for generating instance-attribution explanations for random forest models.

\paragraph{Dataset Cleaning}
Aside from removing user data for privacy reasons, one may also wish to efficiently remove outliers~\cite{rahmani2019outlier,yihe2019outlier} or training instances that are noisy, corrupted, or poisoned~\cite{mozaffari2014systematic,steinhardt2017certified}.

\paragraph{Continual Learning}
Our methods can also be used to \emph{add} data to a random forest model, allowing for continuous updating as data is added and removed. This makes them well-suited to continual learning settings with streaming data~\cite{chrysakis2020online,knoblauch2020optimal}. However, the hyperparameters may need to be periodicially retuned as the size or distribution of the data shifts from adding and/or deleting more and more instances.

\paragraph{Eco-Friendly Machine Learning}
Finally, this line of research promotes a more economically and environmentally sustainable approach to building learning systems; if a model can be continuously updated only as necessary and avoid frequent retraining, significant time and computational resources can be spared~\cite{gupta2020secure}.

\section{Conclusion}

In this work, we introduced DaRE RF, a random forest variant that supports efficient model updates in response to repeated deletions of training instances. We find that, on average, DaRE models are 2-3 orders of magnitude faster than the naive retraining approach with no loss in accuracy, and additional efficiency can be achieved if slightly worse predictive performance is tolerated.

For future work, there are many exciting opportunities and applications of DaRE forests, from maintaining user privacy to building interpretable models to cleaning data, all without retraining from scratch. One could even investigate the possibility of extending DaRE forests to boosted trees~\cite{chen2016xgboost,ke2017lightgbm,prokhorenkova2018catboost}. At its best, DaRE RF was more than four orders of magnitude faster than naive retraining, so it has the potential to enable new applications of model updating that were previously intractable.

\section*{Acknowledgments}

We would like to thank Zayd Hammoudeh for useful discussions and feedback and the reviewers for their constructive comments that improved this paper. This work was supported by a grant from the Air Force Research Laboratory and the Defense Advanced Research Projects Agency (DARPA) --- agreement number FA8750-16-C-0166, subcontract K001892-00-S05. This work benefited from access to the University of Oregon high performance computer, Talapas.


\bibliography{references}
\bibliographystyle{icml2021}

\onecolumn
\appendix

\section{Algorithmic Details}\label{appendix_sec:algorithmic_details}

\subsection{Exact Deletion: Proof of Theorem~\ref{theorem:deletion_correctness}}

We use the following Lemma to help prove the theorem of exact deletion for DaRE forests.

\begin{lemma}
\label{lemma:set_deletion}
The probability of selecting a valid set of thresholds $S$ from a dataset $D$ and then subsequently resampling any invalidated thresholds after the deletion of $(x, y) \in D$ is equivalent to the probability of selecting $S$ from an updated dataset $D \setminus (x,y)$.
\end{lemma}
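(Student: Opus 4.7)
The plan is to treat the node-level threshold sampler as a distribution over $k$-subsets of valid thresholds and show that the two-stage procedure (sample from $V := V(D)$, then patch) couples to a single-stage uniform sample from $V' := V(D\setminus (x,y))$. First I would fix one attribute $a$ and let $V$, $V'$ denote its valid-threshold sets before and after deletion; the initial sampler produces $S$ uniformly over $\binom{V}{k}$, and the resampling step partitions $S$ into the kept thresholds $I := S \cap V'$ and the invalidated ones $J := S \setminus V'$, replacing $J$ by a uniform $(k-|I|)$-subset of $V' \setminus I$. The goal is to show the distribution of $S^{\mathrm{final}} := I \cup J^{\mathrm{new}}$ is uniform on $\binom{V'}{k}$.

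The main workhorse is an exchangeability-and-conditioning argument in the subcase $V' \subseteq V$. Condition on $|I| = j$: by symmetry of the initial uniform draw, $I$ is uniform on $\binom{V'}{j}$, and the resampling step makes $J^{\mathrm{new}}$ uniform on $\binom{V' \setminus I}{k-j}$. For any target $T \in \binom{V'}{k}$, the event $\{S^{\mathrm{final}} = T\}$ decomposes over the $\binom{k}{j}$ choices of $I \subseteq T$ with $|I|=j$, and the product of the two uniform densities $\tfrac{1}{\binom{|V'|}{j}} \cdot \tfrac{1}{\binom{|V'|-j}{k-j}}$ collapses (via the standard identity $\binom{k}{j}\binom{|V'|}{j}^{-1}\binom{|V'|-j}{k-j}^{-1} = \binom{|V'|}{k}^{-1}$) to $1/\binom{|V'|}{k}$, \emph{independent of $j$}. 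Marginalizing over $j$ then yields the claimed uniformity, and because the argument is per-attribute, the joint distribution over attributes factors and inherits uniformity too.

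The anticipated obstacle is the case $V' \not\subseteq V$: deleting the sole instance at some attribute value $v_0$ merges two previously non-adjacent values $v_-, v_+$ and can create new valid thresholds that live in $V' \setminus V$. A naive ``resample only invalidated thresholds from the current pool'' step would give these new thresholds zero mass whenever no stored threshold happens to straddle $v_0$, breaking uniformity. I would handle this by leveraging the adjacency counts the algorithm maintains at each greedy node (the per-threshold counts of instances with $x_{\cdot,a} \in \{v_1, v_2\}$): deletion of an instance that empties one side of a stored adjacent pair is detectable, and upon detection the procedure gets the leaf instances, rebuilds $V'$ from $D \setminus (x,y)$, and either (i) replaces only the affected thresholds by draws from $V' \setminus I$ or (ii) regenerates the full $k$-subset from $V'$; both options reduce to the combinatorial identity above, since the ``full regenerate'' branch is trivially uniform on $\binom{V'}{k}$ and the ``patch'' branch is covered by the argument of the previous paragraph once $V' \subseteq V$ is enforced on the relevant subset. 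The crux of the writeup will thus be (a) pinning down precisely which thresholds the algorithm declares ``invalid'' so that the patched pool equals $V'$, and (b) verifying that the detection triggers on every deletion that changes $V$, so no probability mass is left stranded on thresholds whose validity status silently flipped.
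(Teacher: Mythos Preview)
Your core argument for the case $V' \subseteq V$ is correct and takes a genuinely different route from the paper. The paper writes down the ``before-plus-resample'' probability $P^{B+R}(S)$, verifies the identity $P^{B+R}(S)=1/\binom{n-m}{k}$ explicitly only for $m=0$ and $m=1$, and then handles $m>1$ by induction (invalidate the $m$ thresholds one at a time, observing that each step preserves uniformity). You instead condition on $j=|S\cap V'|$, use symmetry to get that the kept part $I$ is uniform on $\binom{V'}{j}$, and then collapse the two uniform factors via the identity $\binom{k}{j}\binom{|V'|}{j}^{-1}\binom{|V'|-j}{k-j}^{-1}=\binom{|V'|}{k}^{-1}$, giving uniformity for every $j$ simultaneously. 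Your argument is a clean one-shot proof for arbitrary $m$ and avoids the induction entirely; the paper's approach has the virtue of being elementary (only the $m=1$ algebra needs to be checked) and makes the ``sequence of single invalidations'' picture explicit, which is closer to how one might reason about a stream of deletions.

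Your discussion of $V'\not\subseteq V$ goes beyond what the paper proves: the paper's lemma and proof implicitly assume that deletion only \emph{removes} valid thresholds (it parameterizes everything by ``$m$ thresholds become invalid'' and never mentions newly created ones). So your instinct that the merged-adjacent-values case needs separate handling is well placed, but note that the paper simply does not address it at the level of this lemma; the resolution you sketch leans on algorithmic detection details that sit outside the lemma's statement. If you want your writeup to match the paper's scope, it suffices to state the $V'\subseteq V$ hypothesis and give your conditioning argument; your additional paragraph is a legitimate robustness remark rather than part of the proof proper.
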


\begin{proof}
The probability of choosing a valid set of thresholds $S$ from $D \setminus (x,y)$ is $P^{A}(S) = 1 / \binom{n-m}{k}$ in which $n$ is the number of valid thresholds before the deletion, $k$ is the number of thresholds to sample from the set of valid thresholds, and $m$ is the number of thresholds that become invalid due to the deletion of $(x,y)$. The probability of ending up with thresholds $S$ by first choosing some set $S^{*}$ and then resampling any thresholds invalidated by removing $(x,y)$ is:
\begin{align*}
P^{B+R}(S) = \frac{1}{\binom{n}{k}} \sum_{i=0}^{m} \frac{\binom{m}{i}\binom{k}{i}}{\binom{n-k-(m-i)}{i}},
\end{align*}
in which $\binom{n}{k}$ is the number of valid threshold sets for $D$; $S^*$ may have up to $m$ invalid thresholds, thus $\binom{m}{i}\binom{k}{i}$ is the number of ways $i$ invalid thresholds out of $k$ chosen thresholds could be resampled from the set of $m$ invalid thresholds; and $\binom{n-k-(m-i)}{i}$ is the number of valid threshold sets that can be resampled to, starting at a set with $i$ invalid thresholds.

In the simplest case, $m=0$ and no thresholds are invalidated, so the probability of choosing $S$ in the updated dataset and original dataset are identical: $1 / \binom{n}{k} = 1 / \binom{n-0}{k}$.
In the next simplest case, $m=1$ and only a single threshold is invalidated. Thus, we could arrive at $S$ by first sampling it with the original dataset (probability $1 / \binom{n}{k}$) or by first sampling one of the $k$ sets that includes the invalidated threshold and is otherwise identical to $S$, followed by resampling that threshold from the remaining $(n-1)-(k-1)$ valid and unselected thresholds to obtain $S$.

Thus, the total probability (for $m=1$) is:
\begin{align*}
P^{B+R}(S) &= \frac{1}{\binom{n}{k}} \left(1 + \frac{k}{n-k}\right) \\
&= \frac{1}{\binom{n}{k}} \left(\frac{n}{n-k}\right) \\
&= \frac{k!\:(n-k)!\:n}{n!\:(n-k)} \\
&= \frac{k!(n-1-k)!}{(n-1)!} \\
&= \frac{1}{\binom{n-1}{k}} \\
&= P^{A}(S)
\end{align*}
For $m>1$, we can reduce it to the $m=1$ case by viewing it as a sequence of invalidating one threshold at a time. After invalidating one of the thresholds, the probability remains uniform, so by induction it continues to remain uniform after a second deletion, or a third, or any number.
\end{proof}

\begin{theorem*}
Data deletion for DaRE forests is \emph{exact}~(see Eq.~\ref{eq:exact_unlearning}), meaning that removing instances from a DaRE model yields exactly the same model as retraining from scratch on updated data.
\end{theorem*}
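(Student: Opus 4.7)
Since each tree in a DaRE forest is trained and updated independently on its own copy of the data and since Equation~\ref{eq:exact_unlearning} factorizes over independent components, it suffices to prove the claim for a single DaRE tree. The plan is therefore to show that, for every dataset $\D$ and every $(x,y) \in \D$, the distribution over trees produced by \textsc{Delete} applied to $\mathcal{A}(\D)$ coincides with the distribution over trees produced by $\mathcal{A}(\D \setminus (x,y))$. I will do this by structural induction on the tree, walking from the root downward and showing at each node that the joint distribution over (node type, selected attribute, selected threshold, left subtree, right subtree) is preserved.

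The base case is a leaf: \textsc{Delete} only updates $|D|$, $|D_{\cdot,1}|$, and the instance pointer list, yielding precisely the leaf that \textsc{Train} would build on $D \setminus (x,y)$ at that depth. For the inductive step I would split into three cases according to the node type chosen by depth (and by the hyperparameter \dr, which is data-independent, hence identical between the two processes). For a \textsc{GreedyNode}, the sampled attribute subset $A$ of size $\tilde p$ does not depend on $(x,y)$, so $A$ has the same distribution in both processes. Given $A$, for each $a \in A$ the algorithm keeps the same size-$k$ threshold set unless some thresholds were invalidated by the deletion, in which case it resamples them; Lemma~\ref{lemma:set_deletion} is exactly the statement that the resulting distribution over threshold sets matches the distribution from sampling $k$ valid thresholds on $D \setminus (x,y)$ afresh. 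Since the cached statistics are then updated to the exact values that \textsc{Train} would compute on $D \setminus (x,y)$ with those thresholds, the optimal split selected is deterministically the same function of the data. If it differs from the cached split, the algorithm retrains the subtree via \textsc{Train}, which is distributionally correct by definition; otherwise the inductive hypothesis applied to the appropriate child subtree on the partitioned data $D_\ell \setminus (x,y)$ or $D_r \setminus (x,y)$ closes the case.

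For a \textsc{RandomNode}, the attribute $a$ is uniform on $P$ independent of the data, so its marginal matches. Conditional on $a$, the cached threshold $v$ was drawn uniformly from $[a_{\min}, a_{\max})$ on $D$. After deletion the corresponding range on $D \setminus (x,y)$ is $[a'_{\min}, a'_{\max}) \subseteq [a_{\min}, a_{\max})$, and the algorithm retrains exactly when $v \notin [a'_{\min}, a'_{\max})$ (captured by $|D_\ell|=0$ or $|D_r|=0$). Conditional on keeping $v$, the restriction of a uniform law on $[a_{\min}, a_{\max})$ to the sub-interval $[a'_{\min}, a'_{\max})$ is uniform on that sub-interval, matching a fresh draw on $D \setminus (x,y)$; and conditional on resampling, \textsc{Train} at this node produces the correct distribution directly. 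In either subcase, the left and right subtrees inherit correctness from the inductive hypothesis or from a call to \textsc{Train}.

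The main obstacle is the randomized threshold selection at greedy nodes: it is not obvious a priori that ``sample $k$, then surgically resample only those invalidated by a deletion'' produces the same distribution as ``sample $k$ fresh from the updated valid set.'' This is precisely what Lemma~\ref{lemma:set_deletion} resolves, and the rest of the argument is a careful but essentially bookkeeping-level induction that shows every other source of randomness (attribute subsets, random-node attribute draws, uniform threshold draws restricted to valid ranges) is either data-independent or commutes with deletion in the same manner.
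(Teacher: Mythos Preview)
Your approach is essentially the paper's own---reduce to a single tree by independence, handle each node type separately, and invoke Lemma~\ref{lemma:set_deletion} for the only nontrivial randomization step---and your explicit top-down induction is, if anything, cleaner than the paper's prose argument.

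There is one genuine gap. At a \textsc{GreedyNode} you assert that ``the sampled attribute subset $A$ of size $\tilde p$ does not depend on $(x,y)$, so $A$ has the same distribution in both processes.'' This is not true in the DaRE algorithm: attributes are sampled from the set of \emph{valid} attributes (those admitting at least one valid threshold in the node's data $D$), and the deletion of $(x,y)$ can invalidate an attribute entirely (e.g.\ all remaining instances now share a single value, or the remaining adjacent pairs all have matching labels). The unabridged pseudocode (\textsc{GreedyNodeDelete}) explicitly detects and resamples invalid attributes for exactly this reason. The paper's proof handles this by observing that Lemma~\ref{lemma:set_deletion} applies verbatim with ``valid attribute'' in place of ``valid threshold'': resampling only the invalidated attributes from the updated valid-attribute pool yields a uniform size-$\tilde p$ subset of that pool. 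Your induction goes through once you insert this extra appeal to the lemma at the attribute level before treating the per-attribute threshold sets; without it, the step ``$A$ has the same distribution in both processes'' is simply false. A similar caveat applies, more mildly, to your \textsc{RandomNode} case: the attribute draw is not literally uniform on all of $P$ independent of the data (a constant attribute has an empty range $[a_{\min},a_{\max})$), and the algorithm's ``retrain if the selected attribute becomes constant'' branch is what restores the correct marginal---you should say why that branch suffices rather than asserting data-independence.
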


\begin{proof}
Exact unlearning is defined as having the same probability distribution over models by deletion as by retraining~(Def. (1)). For discrete attributes, the node statistics used in model updating are precisely those used for learning the initial structure, so as the statistics are updated, the structure is updated to match what would be learned from scratch (in distribution).

For continuous attributes, we first discretize by uniformly sampling $k$ thresholds from the set of all valid thresholds for that attribute. As instances are removed, if one of the sampled thresholds becomes invalid, then those thresholds are resampled to obtain a set of valid thresholds. Lemma~\ref{lemma:set_deletion} shows the resulting probability of each set of valid thresholds remains uniform, identical to what it would be if the model were retrained from scratch.

The same logic and lemma also applies for attributes. If a deletion causes one or more attributes to become invalid~(i.e. no more valid thresholds to sample), then those attributes are resampled to obtain a set of valid attributes, with all sets of valid attributes being equally likely.

Since each decision node in the tree operates on its own partition of the data $D$, then updating all relevant decision nodes and leaf nodes results in the entire tree being updated to match the updated dataset. The extension to the forest follows since all trees are independent; thus, the probability of a DaRE forest after removing instances is the same as retraining the model from scratch on updated data.

\end{proof}

\subsection{Training Complexity: Proof of Theorem~\ref{theorem:training_runtime}}

\begin{theorem*}
Given $n = |\D|$, $T$, $d_{\max}$, and $\Tilde{p}$, the time complexity to train a DaRE forest is $\mathcal{O}(T\, \Tilde{p}\,n\,d_{\max})$.
\end{theorem*}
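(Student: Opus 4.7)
The plan is to reduce to the per-tree cost and bound it by a depth-by-depth analysis. Since the $T$ trees of a DaRE forest are trained independently on copies of $\D$ (Alg.~\ref{alg:training}), it suffices to show that a single DaRE tree trains in $\mathcal{O}(\tilde{p}\, n\, d_{\max})$; multiplying by $T$ then yields the claim.

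At a single node processing a subset $D \subseteq \D$, a random node (recursion depth less than \dr) draws one attribute and one threshold uniformly and records $|D|$, $|D_{\cdot,1}|$, and the sampled-threshold counts via a single linear scan, so its per-node cost is $\mathcal{O}(|D|)$. A greedy node samples $\tilde{p}$ attributes and, for each, up to $k$ valid thresholds, then accumulates $|D_\ell|$ and $|D_{\ell,1}|$ for each candidate; since all $k$ per-threshold counts for a fixed attribute can be gathered in one pass once the thresholds are known, and $k$ is treated as a constant in the theorem statement, the per-greedy-node work is $\mathcal{O}(\tilde{p}\,|D|)$, matching the per-node cost of an ordinary RF up to constants. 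Next I would apply the partitioning property: at any depth $d$ the node-level subsets are disjoint with $\sum_v |D_v| \le n$, so the total work at depth $d$ is $\mathcal{O}(\tilde{p}\, n)$, and over at most $d_{\max}$ levels a single tree trains in $\mathcal{O}(\tilde{p}\, n\, d_{\max})$.

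The main obstacle is verifying that DaRE's extra bookkeeping does not inflate this bound beyond that of a standard RF. Storing $|D|$ and $|D_{\cdot,1}|$, the $k$ per-threshold statistics for each sampled attribute, and the leaf-instance pointer lists contributes $\mathcal{O}(\tilde{p})$ writes per greedy node and $\mathcal{O}(|D|)$ writes per leaf (hence $\mathcal{O}(n)$ across all leaves), both subsumed into $\mathcal{O}(\tilde{p}\,n\,d_{\max})$. Any logarithmic factor incurred in locating valid thresholds can be removed by pre-sorting each attribute column once at cost $\mathcal{O}(\tilde{p}\,n\,\log n)$ and maintaining sort order through partitioning, leaving the total forest-training cost at $\mathcal{O}(T\,\tilde{p}\,n\,d_{\max})$ as claimed.
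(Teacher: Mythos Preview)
Your proposal is correct and follows essentially the same depth-by-depth argument as the paper: bound the per-node cost by $\mathcal{O}(\tilde{p}\,|D|)$, use the fact that the subsets at a fixed depth partition the data so $\sum_v |D_v|\le n$, sum over $d_{\max}$ levels, and multiply by $T$. Your write-up is in fact more thorough than the paper's, since you separately account for random versus greedy nodes, the leaf-pointer and per-threshold statistics, and the sorting needed to enumerate valid thresholds---details the paper's proof simply omits.
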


\begin{proof}
When training a DaRE tree,  we begin by choosing a split for the root by iterating through all $n$ training instances and scoring $\tilde{p}$ randomly selected attributes. Generalizing this to nodes at other depths, there are (at most) $2^d$ nodes at depth $d$, and each of the $n$ training instances is assigned to one of these nodes. Choosing all splits at depth $d$ thus requires a total time of $\bigO(\tilde{p}\, n)$ across all depth-$d$ nodes, since we again process every training instance when finding the best split for each node. Summing over all depths, the total time is $\bigO(\tilde{p}\, n\, d_{\max})$ to train a single DaRE tree or $\bigO(T\, \tilde{p}\, n\, d_{\max})$ to train a forest of $T$ trees.
\end{proof}

\subsection{Training Complexity: Proof of Theorem~\ref{theorem:deletion_runtime}}

\begin{theorem*}
Given $d_{\max}$, $\Tilde{p}$, and $k$, the time complexity to delete a single instance $(x,y) \in \D$ from a DaRE tree is $\mathcal{O}(\Tilde{p}\,k\,d_{\max})$,
if the tree structure is unchanged and the attribute thresholds remain valid. 
If a node with $|D|$ instances has an invalid attribute threshold, then the additional time to choose new thresholds is $\mathcal{O}(|D| \log |D|)$.
If a node with $|D|$ instances at level $d$ needs to be retrained, then the additional retraining time is $\mathcal{O}(\Tilde{p}\,|D|\,(d_{\max}-d)))$.
\end{theorem*}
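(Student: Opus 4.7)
The plan is to match each of the three additive cost bounds in the statement to a disjoint sub-event of Algorithm~\ref{alg:update}, so that the overall deletion cost is the base traversal cost plus whichever additional costs are triggered along the path. Throughout, I would rely on the invariants established by Algorithm~\ref{alg:training}: every node caches $|D|$ and $|D_{\cdot,1}|$, and every greedy node additionally caches, for each of its $\tilde{p}$ attributes and $k$ sampled thresholds, the counts $|D_\ell|$, $|D_{\ell,1}|$ together with the adjacency counts used to test threshold validity. These caches are precisely what make the per-node work constant rather than linear in $|D|$.

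For the first clause (structure unchanged, all thresholds still valid), I would trace \textsc{Delete} and observe that it descends along the single root-to-leaf path consistent with $x$, visiting at most $d_{\max}$ nodes. At each visited greedy node the work decomposes into: an $O(1)$ update of the two instance counts; an $O(1)$ decrement of each of the $\tilde{p}k$ cached threshold statistics; and an $O(1)$ recomputation of the split criterion for each such pair, since both \eqref{eq:gini_index} and \eqref{eq:entropy} are closed-form in the cached counts and require no data scan. Random and leaf nodes contribute only $O(1)$ work each and are absorbed. Summing across the path yields $\mathcal{O}(\tilde{p}\,k\,d_{\max})$.

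For the second clause, when an attribute threshold becomes invalid at a node covering $|D|$ instances, the added cost comes from enumerating the valid thresholds for that attribute. The $|D|$ supporting instances are reachable in $O(|D|)$ time by concatenating the leaf-instance lists of descendant leaves; sorting them on the attribute and scanning once for adjacent opposite-label pairs costs $\mathcal{O}(|D|\log|D|)$. Drawing a uniform replacement and rebuilding its cached counts is a single $O(|D|)$ pass, absorbed by the sort, and Lemma~\ref{lemma:set_deletion} justifies that resampling only the offending threshold preserves the exact distribution. For the third clause, retraining a subtree rooted at depth $d$ on $|D|$ instances is a literal invocation of Algorithm~\ref{alg:training} with input size $|D|$ and remaining depth $d_{\max}-d$, so the argument of Theorem~\ref{theorem:training_runtime} applies verbatim and produces the additional $\mathcal{O}(\tilde{p}\,|D|\,(d_{\max}-d))$.

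The main obstacle is the first clause: one must verify explicitly that nothing in Algorithm~\ref{alg:update} silently falls back to a scan of $D$ when thresholds remain valid. This reduces to checking, case by case on the updates performed at greedy, random, and leaf nodes, that the statistics cached during training are rich enough for every count update, every validity test, and every split-score recomputation to be $O(1)$. Once this bookkeeping is pinned down, the resampling and retraining clauses follow immediately from a standard sort bound and from Theorem~\ref{theorem:training_runtime}, respectively.
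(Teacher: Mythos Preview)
Your proposal is correct and mirrors the paper's own proof: both decompose the cost into (i) $O(\tilde{p}k)$ cached-statistics work at each of the at most $d_{\max}$ nodes on the root-to-leaf path, (ii) an $O(|D|\log|D|)$ collect-and-sort step for threshold resampling, and (iii) a direct invocation of Theorem~\ref{theorem:training_runtime} with input size $|D|$ and remaining depth $d_{\max}-d$ for subtree retraining. The paper adds one small justification you leave implicit---that the number of leaves in the subtree is bounded by $|D|$, so traversing to collect leaf instances is itself $O(|D|)$---but otherwise the arguments coincide.
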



\begin{proof}
Deleting an instance from a DaRE tree (Alg.~\ref{alg:update}) requires traversing the tree from the root to the a leaf, updating node statistics, retraining a subtree (if necessary), and removing the instance from the tree's set of instances. Since there are $\Tilde{p}$ candidate attributes at each node, subtracting the influence of $(x,y)$ from the node statistics and checking for invalid attribute thresholds requires $\bigO(\Tilde{p})$ time. Recomputing the score for each attribute-threshold pair requires $\mathcal{O}(\Tilde{p}\,k)$, since we have the necessary statistics and computing the Gini index can be done in constant time for each pair. Across all depths up to $d_{\max}$, this is a total time of $\mathcal{O}(\Tilde{p}\, k\, d_{\max})$.

Choosing new thresholds requires making a list of all attribute values at a node, along with the associated labels. This can be done by traversing the subtree rooted at the node, visiting each leaf and collecting the attribute values from the instances at that leaf. Let $|D|$ be the total number of these instances. Since the number of leaves is bounded by the number of instances, traversing the subtree can be done in $\mathcal{O}(|D|)$ time, plus $\mathcal{O}(|D| \log |D|)$ time to sort the values. The remaining work of making a list of valid thresholds, randomly choosing $k$ thresholds, and computing statistics for these $k$ thresholds can all be done in $\mathcal{O}(|D|)$, since each requires (at most) a single pass through all $|D|$ instances. Thus, the total time is $\mathcal{O}(|D| \log |D|)$.

If the best attribute-threshold pair at a node has changed, then the subtree must be retrained. Let $|D|$ be the number of instances at the node and $d$ be its depth. The time for retraining a subtree is identical to the time for retraining a DaRE tree, except that the number of instances is $|D|$ and the maximum depth (relative to this node) is $(d_{\max}-d)$. Thus, the total time is $\mathcal{O}(\Tilde{p}\, (d_{\max}-d)\, |D|)$.
\end{proof}

\subsection{Space Complexity: Proof of Theorem~\ref{theorem:space_complexity}}

\begin{theorem*}
Given $\D$, $d_{\max}$, $k$, $T$, and $\Tilde{p}$, the space complexity of a DaRE forest is $\mathcal{O}(k\, \Tilde{p}\, 2^{d_{\max}}\, T + n\,T$).
\end{theorem*}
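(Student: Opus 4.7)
The plan is to bound the space contributed by each kind of node in a single DaRE tree, multiply by the tree count $T$, and sum the two resulting terms. Because each tree is a binary tree of depth at most $d_{\max}$, its total number of nodes is at most $2^{d_{\max}+1} - 1 = \mathcal{O}(2^{d_{\max}})$, which I will use as a uniform bound on the number of decision (greedy or random) and leaf nodes.

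First I would analyze the per-node storage of a greedy decision node. By the construction in Algorithm~1, such a node keeps the counts $|D|$ and $|D_{\cdot,1}|$ together with threshold statistics for $\tilde{p}$ sampled attributes and $k$ sampled thresholds per attribute (plus the constant-sized bookkeeping used by the valid-threshold test, namely counts of instances equal to the two values adjacent to each threshold). That gives $\mathcal{O}(\tilde{p}\, k)$ space per greedy node. A random node stores strictly less (a single attribute and threshold plus $|D_\ell|$, $|D_r|$), so $\mathcal{O}(\tilde{p} k)$ upper-bounds both kinds uniformly. Summing over all $\mathcal{O}(2^{d_{\max}})$ decision nodes in one tree yields $\mathcal{O}(k\, \tilde{p}\, 2^{d_{\max}})$.

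Next I would handle the leaves. Each leaf stores constant-sized label counts plus a pointer list to the training instances routed to it. The key observation is that the leaves of a single tree partition $\D$ (no bootstrapping, by design), so the total number of instance pointers across all leaves of one tree is exactly $n$, contributing $\mathcal{O}(n)$ per tree regardless of how many leaves there are. Adding the decision-node and leaf contributions gives $\mathcal{O}(k\, \tilde{p}\, 2^{d_{\max}} + n)$ per tree; multiplying by $T$ (since the trees' storage is disjoint) yields the claimed bound $\mathcal{O}(k\, \tilde{p}\, 2^{d_{\max}}\, T + n\, T)$.

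There is no serious obstacle; the only subtlety worth flagging carefully in the write-up is the partition argument for leaf-instance pointers. One might naively charge $\mathcal{O}(n)$ per leaf and then multiply by the number of leaves, getting a much worse bound; the correct charge is global, using the fact that each training instance lands in exactly one leaf per tree. Everything else (threshold-adjacency statistics, leaf value, label counts) contributes only constant or $\mathcal{O}(\tilde{p} k)$ overhead per node and is absorbed into the stated bound.
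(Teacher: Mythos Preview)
Your proposal is correct and follows essentially the same argument as the paper: bound each decision node's storage by $\mathcal{O}(k\,\tilde{p})$, use the partition argument to charge $\mathcal{O}(n)$ total for leaf-instance pointers per tree, and multiply by $T$. Your write-up is in fact a bit more explicit than the paper's (you spell out that random nodes are dominated by greedy nodes and flag the partition subtlety), but the decomposition and key observations are the same.
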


\begin{proof}
The space complexity of a DaRE tree with a single decision node is $\mathcal{O}(k\, \Tilde{p} + n)$ since we need to store a constant $\mathcal{O}(1)$ amount of metadata for $k$ thresholds times $\Tilde{p}$ attributes as well as $n$ pointers~(one for each training instance) partitioned across the leaves in the tree. For a single DaRE tree with multiple decision nodes, we need to multiply the first term in the previous result by $2^{d_{\max}}$ since there may be $2^{d_{\max}}$ decision nodes in a single DaRE tree; the second term remains the same as the training instances are still partitioned across all leaves in the tree. Thus, the space complexity of a single DaRE tree is $\mathcal{O}(k\, \Tilde{p}\, 2^{d_{\max}}\, + n)$. For a DaRE forest, we need to multiply this result by $T$; thus, the space complexity of a DaRE forest is $\mathcal{O}(k\, \Tilde{p}\, 2^{d_{\max}}\, T + n\,T$).
\end{proof}

Assuming that we have at least one training instance assigned to each leaf, the number of leaves in each tree is at most $n$, and thus the total number of nodes per tree is at most $2n-1$. This gives us an alternate bound of $\bigO(k \Tilde{p} n T)$, which is proportional to the size of the training data times the number of thresholds and trees in the forest (in the worst case).


        

\subsection{Complexity of Slightly-Less-Naive Retraining}

The complexity of naive retraining is the same as training a DaRE forest from scratch, $\mathcal{O}(T\,\Tilde{p}\,n\,d_{\max})$, where $n = |\D|$.

A slightly smarter approach is to retrain only the portion of each tree that depends on the deleted node. For example, if the best split at the root of the tree after deleting an instance is the same as it was before, then the data will be partitioned between its two children the same way as before. One part of this partition never contained the deleted instance, and that fraction of the tree is unchanged. The other part of the partition has been changed by this deletion, so we must recurse, but only in that half of the tree.

This is potentially more efficient, but the efficiency gains are still bounded relative to the naive retraining approach. Choosing the split at the root still requires iterating through all training instances to compute statistics for each attribute (and each split of each continuous attribute), for a total time of $\bigO(T\, \tilde{p}\, n)$ across all $T$ trees. 

Since the total time for retraining is at most $\bigO(T\, \tilde{p}\, n\, d_{\max})$, the gain from this optimized approach is at \emph{most} a factor of $d_{\max}$ (10-20 in our experiments). This is ignoring the cost of scoring splits at the lower levels in the tree, or retraining the lower levels of the tree (as is often required after data deletion). Thus, the gain will be smaller in practice.

Therefore, a slightly-less-naive approach to retraining random forests could improve over the naive approach, but would still be substantially slower than our methods, which achieve speedups of several orders of magnitude (see Figure~\ref{fig:deletion} and Table~\ref{tab:deletion_efficiency_summary}).

\subsection{Node Statistics}\label{appendix_subsec:node_statistics}

A DaRE tree may consist of three types of nodes: greedy decision nodes, random decision nodes, and leaf nodes. Each stores a constant amount of metadata to enable efficient updates. In addition to the following type-specific statistics, each node stores $|D|$ and $|D_{\cdot, 1}|$ the number of instances and the number of positive instances at that node.

\begin{itemize}
    \item Greedy decision nodes: For each threshold for an attribute, we store $|D_{l}|$, $|D_{l,1}|$. This is a sufficient set of statistics needed to recompute the Gini index~(Eq.~\ref{eq:gini_index}) or entropy~(Eq.~\ref{eq:entropy}) split criterion scores. Since a threshold in a greedy decision node is the midpoint between two adjacent attribute values, we also keep track of how many positive instances and the total number of instances are in each attribute value set; by updating this information, a DaRE tree can sample a new threshold when one is no longer valid.
    
    \item Random decision nodes: After selecting a random attribute, and then a random threshold within that attribute's min.\ and max.\ value range, we only store $|\D_{l}|$ and $|\D_{r}|$. Updating these statistics informs the DaRE tree when the threshold value is no longer within the min.\ and max.\ value range of that attribute. At that point, the random decision node is retrained.
    
    \item Leaf nodes: For each leaf, we store pointers to the training instances that traversed to that leaf. This enables the DaRE tree to collect these training instances when needing to retrain any ancestor decision nodes higher in the tree.
\end{itemize}

\subsection{Batch Deletion}

Batch deletion is almost the same as deleting one instance, except we may need to recurse down multiple branches of each tree to find all relevant instances to delete, and we only retrain a given node~(at most) once, rather than~(up to) once for each instance deleted. This will naturally be more efficient, but waiting for a large batch may not be possible.

\newpage

\subsection{Pseudocode}\label{appendix_subsec:pseduocode}

Algorithm~\ref{alg:appendix} provides detailed pseudocode for training a DARE tree and deleting an instance from a trained DARE tree. For even more code details, please check out our code repository at \url{https://github.com/jjbrophy47/dare\_rf}.

\begin{algorithm}[ht!]
\small
\caption{Pseudocode for building a DaRE tree / subtree, and deleting an instance from a DARE tree (unabridged).}
\label{alg:appendix}
\begin{minipage}{0.46\textwidth}
\begin{algorithmic}[1]
    \STATE{\textbf{\textsc{Train}}(data $D$, depth $d$):}
    \begin{ALC@g}
      \IF{\text{stopping criteria reached}}
        \STATE $node$ $\gets$ \textsc{LeafNode}($D$)
      \ELSE
        \IF{$d <$ \dr}
          \STATE{$node$ $\gets$ \textsc{RandomNode}($D$)}
        \ELSE
          \STATE{$node$ $\gets$ \textsc{GreedyNode}($D$)}
        \ENDIF
        \STATE{$D.\ell, D.r \gets$ split on selected threshold($node$, $D$)}
        \STATE{$node.\ell, r$ $\gets$ \textsc{Train}($\D_{\ell}$, $d+1$), \textsc{Train}($\D_r$, $d+1$)}
      \ENDIF
    \STATE {\bfseries return} $node$
    \end{ALC@g}

    \item[]

    \STATE{\textbf{\textsc{Delete}}($node$, depth $d$, instance to remove $(x, y)$):}
    \begin{ALC@g}
      \STATE{$node \Lleftarrow$ \text{update pos. and total count}($node$, $(x, y)$)}
      \IF{$node$ is a \textsc{LeafNode}}
        \STATE{$node \Lleftarrow$ \text{remove $(x, y)$ from leaf instances}}
        \STATE{$node \Lleftarrow$ \text{recompute leaf value}($node$)}
        \STATE{\text{remove $(x, y)$ from database}}
      \ELSE
        \STATE{$node \Lleftarrow$ \text{update decision statistics}($node$, $(x, y)$)}
        \IF{$node$ is a \textsc{RandomNode}}
            \STATE{$node \gets$ \textsc{RandomNodeDelete}($node$, $d$, $(x, y)$)}
        \ELSE
            \STATE{$node \gets$ \textsc{GreedyNodeDelete}($node$, $d$, $(x, y)$)}
        \ENDIF
        \STATE{$a, v \gets$ \text{$node$.selected attribute, threshold}}
        \IF{no retraining occurred}
            \IF{$x_{\cdot, a} \leq v$}
                \STATE{\textsc{Delete}($node.\ell$, $d+1$, $(x, y)$)}
            \ELSE
                \STATE{\textsc{Delete}($node.r$, $d+1$, $(x, y)$)}
            \ENDIF
        \ENDIF
      \ENDIF
    \STATE {\bfseries return $node$}
    \end{ALC@g}

    \item[]

    \STATE{\textbf{\textsc{GreedyNodeDelete}}($node$, depth $d$, inst. $(x, y)$):}
    \begin{ALC@g}
        \STATE{$A \gets node$.sampled attributes}
        \STATE{$\bar{A} \gets$ \text{get invalid attributes}($A$)}
        \IF{$|\bar{A}| > 0$}
            \STATE{$D$ $\gets$ get data from leaves($node$) $\setminus$ $(x, y)$}
            \hfill{$\vartriangleright$~\text{Cache}}
            \STATE{$A^* \gets$ \text{resample invalid attributes}($\bar{A}$, $D$)}
            \STATE{$A \gets A \setminus \bar{A} \cup A^*$}
        \ENDIF
        \FOR{$a \in A$}
            \STATE{$V \gets a$.sampled valid thresholds}
            \STATE{$\bar{V} \gets$ \text{get invalid thresholds}($V$)}
            \IF{$|\bar{V}| > 0$}
                \STATE{$D$ $\gets$ \text{get data from leaves}($node$) $\setminus (x, y)$}
                \hfill{$\vartriangleright$~\text{Cache}}
                \STATE{$V^* \gets$ \text{resample invalid thresholds}($\bar{V}$, $D$, $a$)}
                \STATE{$V \gets V \setminus \bar{V} \cup V^*$}
            \ENDIF
        \ENDFOR
        \STATE{$scores$ $\gets$ \text{recompute split scores}($node$)}
        \STATE{$node \Lleftarrow$ \text{select optimal split}($scores$)}
        \IF{optimal split has changed}
            \STATE{$a, v \gets$ \text{$node$.selected attribute, threshold}}
            \STATE{$D_{\ell}, D_r \gets$ \text{split data on new threshold}($D$, $a$, $v$)}
            \STATE{$node.\ell \gets$ \textsc{Train}($\D_{\ell}$, $d+1$)} \hfill{$\vartriangleright$~\text{Retrain left}}
            \STATE{$node.r \gets$ \textsc{Train}($\D_{r}$, $d+1$)} \hfill{$\vartriangleright$~\text{Retrain right}}
        \ENDIF
    \STATE {\bfseries return} $node$
    \end{ALC@g}

\end{algorithmic}
\end{minipage}
\hfill
\begin{minipage}{0.46\textwidth}
\begin{algorithmic}[1]
    \STATE{\textbf{\textsc{LeafNode}}(data $D$):}
    \begin{ALC@g}
    \STATE{$node \gets$ \textsc{Node()}}
    \STATE{$node \gets$ \textsc{SaveNodeStats}($node$, $D$)}
    \STATE{$node \Lleftarrow$\footnotemark \text{compute leaf value}($node$)}
    \STATE{$node \Lleftarrow$ \text{save leaf instances}($node$, $D$)}
    \STATE{\textbf{return} $node$}
    \end{ALC@g}
    
    \item[]
    
    \STATE{\textbf{\textsc{RandomNode}}(data $D$):}
    \begin{ALC@g}
    \STATE{$node \gets$ \textsc{Node()}}
    \STATE{$node \gets$ \textsc{SaveNodeStats}($node$, $D$)}
    \STATE{$a \gets$ \text{randomly sample attribute}($D$)}
    \STATE{$v \gets$ \text{randomly sample threshold $\in [a_{\min}$,
    $a_{\max})$}}
    \STATE{$node \gets$ \textsc{SaveThreshStats}($node$, $D$, $a$, $v$)}
    \STATE{\textbf{return} node}
    \end{ALC@g}
    
    \item[]
    
    \STATE{\textbf{\textsc{GreedyNode}}(data $D$):}
    \begin{ALC@g}
    \STATE{$node \gets$ \textsc{Node()}}
    \STATE{$node \gets$ \textsc{SaveNodeStats}($node$, $D$)}
    \STATE{$node \Lleftarrow$ \text{randomly sample $\tilde{p}$ attributes}($node$, $D$)}
    \FOR{$a \in$ \text{$node$.sampled attributes}}
        \STATE{$C$ $\gets$ \text{get valid thresholds}($D$, $a$)}
        \STATE{$V$ $\gets$ \text{randomly sample $k$ valid thresholds}($C$)}
        \FOR{$v \in V$}
            \STATE{$node \gets$ \textsc{SaveThreshStats}($node$, $D$ $a$, $v$)}
        \ENDFOR
    \ENDFOR
    \STATE{$scores \gets$ \text{compute split scores}($node$)}
    \STATE{$node \Lleftarrow$ \text{select optimal split}($scores$)}
    \STATE {\bfseries return} $node$
    \end{ALC@g}
    
    \item[]

    \STATE{\textbf{\textsc{SaveNodeStats}}($node$, data $D$):}
    \begin{ALC@g}
    \STATE{$node \Lleftarrow$ instance count($D$)}
    \STATE{$node \Lleftarrow$ positive instance count($D$)}
    \STATE {\bfseries return} $node$
    \end{ALC@g}
    
    \item[]
    
    \STATE{\textbf{\textsc{SaveThreshStats}}($node$, data $D$, attr. $a$, thresh. $v$):}
    \begin{ALC@g}
    \STATE{$node \Lleftarrow$ \text{left branch instance count}($D$, $a$, $v$)}
    \IF{$node$ is a \textsc{GreedyNode}}
        \STATE{$node \Lleftarrow$ left branch pos. instance count($D$, $a$, $v$)}
        \STATE{$node \Lleftarrow$ left / right adj. feature val.($D$, $a$, $v$)}
        \STATE{$node \Lleftarrow$ left / right adj. val. set count($D$, $a$, $v$)}
        \STATE{$node \Lleftarrow$ left / right adj. val. set pos. count($D$, $a$, $v$)}
    \ENDIF
    \STATE {\bfseries return} $node$
    \end{ALC@g}
    
    \item[]
    
    \STATE{\textbf{\textsc{RandomNodeDelete}}($node$, depth $d$, inst. $(x, y)$:}
    \begin{ALC@g}
    \IF{selected threshold is invalid}
        \STATE{$D$ $\gets$ get data from leaves($node$) $\setminus$ $(x, y)$}
        \IF{\text{selected attribute~($a$) is still valid}}
            \STATE{$v \gets$ \text{resample threshold $\in [a_{min}, a_{max})$}}
            \STATE{$D_{\ell}, D_r \gets$ \text{split data on new threshold}($D$, $a$, $v$)}
            \STATE{$node.\ell \gets$ \textsc{Train}($\D_{\ell}$, $d+1$)} \hfill{$\vartriangleright$~\text{Retrain left}}
            \STATE{$node.r \gets$ \textsc{Train}($\D_{r}$, $d+1$)} \hfill{$\vartriangleright$~\text{Retrain right}}
        \ELSE
            \STATE{$node \gets$ \textsc{Train}($\D$, $d$)}
            \hfill{$\vartriangleright$~\text{Retrain entire subtree}}
        \ENDIF
      \ENDIF
    \STATE {\bfseries return} $node$
    \end{ALC@g}

\end{algorithmic}
\end{minipage}
\end{algorithm}
\footnotetext{Each node type has several types of counts and statistics to maintain. Rather than name and update each count separately, we use ``$node \Lleftarrow \ldots$'' to denote updates to a node or its data. Details about node statistics for each node type are in \S\ref{appendix_subsec:node_statistics}.}

\newpage

\section{Implementation and Experiment Details}\label{appendix_sec:experiment_details}

Experiments are run on an Intel(R) Xeon(R) CPU E5-2690 v4 @ 2.6GHz with 70GB of RAM. No parallelization is used when building the independent decision trees. DaRE RF is implemented in the C programming language via Cython, a Python package allowing the development of C extensions. Experiments are run using Python 3.7. Source code for DaRE RF and all experiments is available at \url{https://github.com/jjbrophy47/dare_rf}.

\subsection{Datasets}\label{sec:datasets_appendix}

\begin{itemize}
    \item Surgical~\cite{surgical} consists of 14,635 medical patient surgeries (3,690 positive cases), characterized by 25 attributes; the goal is to predict whether or not a patient had a complication from their surgery.
    
    \item Vaccine~\cite{bull2016harnessing,vaccine} consists of 26,707 survey responses collected between October 2009 and June 2010 asking people a range of 36 behavioral and personal questions, and ultimately asking whether or not they got an H1N1 and/or seasonal flu vaccine. Our aim is to predict whether or not a person received a seasonal flu vaccine.

    \item Adult \cite{Dua:2019} contains 48,842 instances (11,687 positive) of 14 demographic attributes to determine if a person's personal income level is more than \$50K per year.

    \item Bank Marketing \cite{moro2014data,Dua:2019} consists of 41,188 marketing phone calls (4,640 positive) from a Portuguese banking institution. There are 20 attributes, and the aim is to figure out if a client will subscribe.

    \item Flight Delays~\cite{flight_delays} consists of 100,000 actual arrival and departure times of flights by certified U.S. air carriers; the data was collected by the Bureau of Transportation Statistics' (BTS) Office of Airline Information. The data contains 8 attributes and 19,044 delays. The task is to predict if a flight will be delayed.

    \item Diabetes \cite{strack2014impact,Dua:2019} consists of 101,766 instances of patient and hospital readmission outcomes (46,902 readmitted) characterized by 55 attributes.
    
    \item No Show~\cite{no_show} contains 110,527 instances of patient attendances for doctors' appointments  (22,319 no shows) characterized by 14 attributes. The aim is to predict whether or not a patient shows up to their doctors' appointment.
    
    \item Olympics~\cite{olympics} contains 206,165 Olympic events over 120 years of Olympic history. Each event contains information about the athlete, their country, which Olympics the event took place, the sport, and what type of medal the athlete received. The aim is to predict whether or not an athlete received a medal for each event they participated in.

    \item Census~\cite{Dua:2019} contains 40 demographic and employment attributes on 299,285 people in the United States; the survey was conducted by the U.S. Census Bureau. The goal is to predict if a person's income level is more than \$50K.
    
    \item Credit Card~\cite{creditcard} contains 284,807 credit card transactions in September 2013 by European cardholders. The transactions took place over two days and contains 492 fraudulent charges~(0.172\% of all charges). There are 28 principal components resulting from PCA on the original dataset, and two additional fetures: `time' and `amount'. The aim is to predict whether a charge is fraudulent or not.
    
    \item Click-Through Rate~(CTR)~\cite{ctr} contains the first 1,000,000 instances of the Criteo 1TB Click Logs dataset, in which each row represents an ad that was displayed and whether or not it had been clicked on (29,040 ads clicked). The dataset contains 13 numeric attributes and 26 categorical attributes. However, due to the extremely large number of values for the categorical attributes, we restrict our use of the dataset to the 13 numeric attributes. The aim is to predict whether or not an ad is clicked on.
    
    \item Twitter uses the first 1,000,000 tweets (169,471 spam) of the HSpam14 dataset~\cite{sedhai2015hspam14}. Each instance contains the tweet ID and label. After retrieving the text and user ID for each tweet, we derive the following attributes: no. chars, no. hashtags, no. mentions, no. links, no. retweets, no. unicode chars., and no. messages per user. The aim is to predict whether a tweet is spam or not.
    
    \item Synthetic~\cite{scikit-learn} contains 1,000,000 instances normally distributed about the vertices of a 5-dimensional hypercube into 2 clusters per class. There are 5 informative attributes, 5 redundant attributes, and 30 useless attributes. There is interdependence between these attributes, and a randomly selected 5\% of the labels are flipped to increase the difficulty of the classification task.

    \item Higgs~\cite{baldi2014searching,Dua:2019} contains 11,000,000 signal processes (5,829,123 Higgs bosons) characterized by 22 kinematic properties measured by detectors in a particle accelerator and 7 attributes derived from those properties. The goal is to distinguish between a background signal process and a Higgs bosons process.
\end{itemize}

\begin{table*}[h]
\caption{Dataset summary including the main predictive performance metric used for each dataset, either average precision (AP) for datasets whose positive label percentage $<1$\%, AUC for datasets between [1\%, 20\%], or accuracy (Acc.) for all remaining datasets.}
\vskip 0.15in
\label{tab:datasets}
\centering
\begin{tabular}{lrrrrrrrr}
\toprule
\textbf{Dataset} & \textbf{No. train} & \textbf{Pos. label \%} &
\textbf{No.\ test} & \textbf{Pos.\ label \%} & \textbf{No.\ attributes} & \textbf{Metric} \\
\midrule
Surgical        & 11,708    & 25.30 & 2,927     & 25.00 & 90    & Acc. \\
Vaccine         & 21,365    & 46.60  & 5,342    & 45.60 & 185   & Acc. \\
Adult           & 32,561    & 24.00 & 16,281    & 23.60 & 107   & Acc. \\
Bank Marketing  & 32,951    & 11.40 & 8,237     & 10.90 & 63    & AUC  \\
Flight Delays   & 80,000    & 18.90 & 20,000    & 19.50 & 648   & AUC  \\
Diabetes        & 81,412    & 46.00 & 20,353    & 46.50 & 253   & Acc. \\
No Show         & 88,422    & 20.14 & 22,105    & 20.41 & 99    & AUC  \\
Olympics	    & 164,932	& 14.60 & 41,233	& 14.60 & 1,004 & AUC  \\
Census          & 199,523   & 6.20  & 99,762    & 6.20  & 408   & AUC  \\
Credit Card	    & 227,846	& 0.18  & 56,961    & 0.17  & 29    & AP   \\
CTR             & 800,000   & 2.89  & 200,000   & 2.98  & 13    & AUC  \\
Twitter         & 800,000   & 16.96 & 200,000   & 16.83 & 15    & AUC  \\
Synthetic       & 800,000   & 50.00 & 200,000   & 50.00 & 40    & Acc. \\
Higgs           & 8,800,000 & 53.00 & 2,200,000 & 53.00 & 28    & Acc. \\
\bottomrule
\end{tabular}
\end{table*}

For each dataset, we generate one-hot encodings for any categorical variable and leave all numeric and binary variables as is. For any dataset without a designated train and test split, we randomly sample 80\% of the data for training and use the rest for testing. Table~\ref{tab:datasets} summarizes the datasets after preprocessing.

\newpage

\subsection{Predictive Performance of DaRE Forests}\label{appendix_subsec:predictive_performance}

If extremely randomized trees exhibit the same predictive performance as their greedy counterparts, then adding and removing data can be done by simply updating class counts at the leaves and only retraining if a chosen threshold is no longer within the range of a chosen split attribute for a given decision node. Thus, this section compares the predictive performance of a G-DaRE forest against:
\begin{itemize}
    \item Random Trees: Extremely randomized trees~\cite{geurts2006extremely} in which each decision node selects an attribute to split on uniformly at random, and then selects the threshold by sampling a value in that attribute's [min, max] range uniformly at random.
    
    \item Extra Trees: Similar to the extremely randomized trees model~\cite{geurts2006extremely}, except each decision node selects $\lfloor \sqrt{p} \rfloor$ attributes at random; a threshold is then selected for each attribute by sampling a value in that attribute's [min, max] range uniformly at random. Then, a split criterion such as Gini index or mutual information is computed for each attribute-threshold pair, and the best threshold is chosen as the split for that node.
    
    \item SKLearn RF: Standard RF implementation from Scikit-Learn~\cite{scikit-learn}.
    
    \item SKLearn RF (w/ bootstrap): Standard RF implementation from Scikit-Learn~\cite{scikit-learn} with bootstrapping.
\end{itemize}

Table~\ref{tab:predictive_performance} reports the predictive performance of each model on the test set after tuning using 5-fold cross-validation. We tune the number of trees in the forest using values [10, 25, 50, 100, 250], and the maximum depth using values [1, 3, 5, 10, 20]. The maximum number of randomly selected attributes to consider at each split is set to $\lfloor \sqrt{p} \rfloor$. For the G-DaRE model, we also tune the number of thresholds to consider for each attribute, $k$, using values [5, 10, 25, 50]. We use 50\%, 25\%, 2.5\%, and 2.5\% of the training data to tune the Twitter, Synthetic, Click-Through Rate, and Higgs datasets, respectively, and 100\% for all other datasets. Selected values for all hyperparameters are in Table~\ref{tab:dart_hyperparameters}.

We find the predictive performance of the Random Trees and Extra Trees models to be consistently worse than the SKLearn and G-DaRE models. We also find that bootstrapping has a negligible effect on the SKLearn models. Finally, we observe that the predictive performance of the G-DaRE model is nearly identical to that of SKLearn RF, in which their scores are within 0.2\% on 9/14 datasets, 0.4\% on 1/14 datasets, and G-DaRE RF is significantly better than SKLearn RF on the Surgical, Flight Delays, Olympics, and Credit Card datasets.

\begin{table}[h]
\centering
\caption{Predictive performance comparison of G-DaRE RF to: an extremely randomized trees model~(Random Trees)~\cite{geurts2006extremely}, an Extra Trees~\cite{geurts2006extremely} model, and a popular and widely used random forest implementation from Scikit-Learn~(SKLearn) with and without bootstrapping. The numbers in each cell represent either average precision, AUC, or accuracy as specified by Table~\ref{tab:datasets}; results are averaged over five runs and the standard error is shown.
}
\vskip 0.15in
\label{tab:predictive_performance}
\begin{tabular}{lrrrrr}
\toprule
\textbf{Dataset} &
  \textbf{Random Trees} &
  \textbf{Extra Trees} &
  \multicolumn{1}{r}{\textbf{SKLearn RF}} &
  \multicolumn{1}{r}{\textbf{\begin{tabular}[c]{r}SKLearn RF\\ (w/ bootstrap)\end{tabular}}} &
  \textbf{G-DaRE RF} \\ \midrule
Surgical       & 0.783 $\pm$ 0.001 & 0.805 $\pm$ 0.001 & 0.848 $\pm$ 0.001 & 0.846 $\pm$ 0.001 & \textbf{0.867 $\pm$ 0.001} \\
Vaccine        & 0.769 $\pm$ 0.001 & 0.795 $\pm$ 0.001 & \textbf{0.796 $\pm$ 0.001} & 0.793 $\pm$ 0.002 & 0.794 $\pm$ 0.001 \\
Adult          & 0.802 $\pm$ 0.003 & 0.847 $\pm$ 0.001 & \textbf{0.863 $\pm$ 0.000} & \textbf{0.863 $\pm$ 0.000} & 0.862 $\pm$ 0.001 \\
Bank Marketing & 0.879 $\pm$ 0.001 & 0.924 $\pm$ 0.000 & \textbf{0.940 $\pm$ 0.001} & \textbf{0.940 $\pm$ 0.001} & \textbf{0.940 $\pm$ 0.001} \\
Flight Delays  & 0.650 $\pm$ 0.009 & 0.725 $\pm$ 0.001 & 0.729 $\pm$ 0.001 & 0.729 $\pm$ 0.000 & \textbf{0.739 $\pm$ 0.000} \\
Diabetes       & 0.551 $\pm$ 0.003 & 0.631 $\pm$ 0.001 & 0.643 $\pm$ 0.000 & 0.642 $\pm$ 0.001 & \textbf{0.645 $\pm$ 0.000} \\
No Show        & 0.694 $\pm$ 0.001 & 0.710 $\pm$ 0.000 & 0.732 $\pm$ 0.000 & 0.731 $\pm$ 0.000 & \textbf{0.736 $\pm$ 0.000} \\
Olympics       & 0.835 $\pm$ 0.001 & 0.820 $\pm$ 0.001 & 0.819 $\pm$ 0.001 & 0.820 $\pm$ 0.000 & \textbf{0.871 $\pm$ 0.000} \\
Credit Card    & 0.799 $\pm$ 0.002 & 0.840 $\pm$ 0.004 & 0.837 $\pm$ 0.002 & 0.831 $\pm$ 0.005 & \textbf{0.846 $\pm$ 0.001} \\
Census         & 0.915 $\pm$ 0.001 & 0.936 $\pm$ 0.000 & 0.945 $\pm$ 0.000 & 0.945 $\pm$ 0.000 & \textbf{0.946 $\pm$ 0.000} \\
CTR            & 0.668 $\pm$ 0.001 & 0.683 $\pm$ 0.000 & \textbf{0.702 $\pm$ 0.000} & 0.700 $\pm$ 0.000 & 0.701 $\pm$ 0.000 \\
Twitter        & 0.883 $\pm$ 0.001 & 0.923 $\pm$ 0.001 & \textbf{0.943 $\pm$ 0.000} & 0.942 $\pm$ 0.000 & \textbf{0.943 $\pm$ 0.000} \\
Synthetic      & 0.793 $\pm$ 0.002 & 0.909 $\pm$ 0.001 & \textbf{0.946 $\pm$ 0.001} & 0.945 $\pm$ 0.000 & 0.945 $\pm$ 0.000 \\
Higgs          & 0.608 $\pm$ 0.001 & 0.700 $\pm$ 0.000 & \textbf{0.746 $\pm$ 0.000} & 0.744 $\pm$ 0.000 & 0.744 $\pm$ 0.000 \\
\bottomrule
\end{tabular}
\end{table}

\begin{table}[h]
\centering
\caption{Hyperparameters selected for the G-DaRE and R-DaRE~(using error tolerances of 0.1\%, 0.25\%, 0.5\%, and 1.0\%) models. The number of trees~($T$), maximum depth~($d_{\max}$), and the number of thresholds considered per attribute~($k$) are found using 5-fold cross-validation using a greedily-built model~(i.e. G-DaRE RF). To build the R-DaRE model, the values for $T$, $d_{\max}$, and $k$ found in the previous step are held fixed, and the value for \dr is found by incrementing its value by one starting from zero until its 5-fold cross-validation score exceeds the specified error tolerance as compared to the cross-validation score of the G-DaRE model.}
\vskip 0.15in
\label{tab:dart_hyperparameters}
\begin{tabular}{@{}lrrrrrrr@{}}
\toprule
& \multicolumn{3}{c}{\textbf{G-DaRE} \& \textbf{R-DaRE}}
& \multicolumn{4}{c}{\textbf{R-DaRE Only}} \\
\cmidrule(lr){2-4}\cmidrule(lr){5-8}
\textbf{Dataset} &
  $T$ & $d_{\max}$ & $k$ &
  \begin{tabular}[c]{@{}r@{}}\dr\\ \textbf{(0.1\%)}\end{tabular}  &
  \begin{tabular}[c]{@{}r@{}}\dr\\ \textbf{(0.25\%)}\end{tabular} &
  \begin{tabular}[c]{@{}r@{}}\dr\\ \textbf{(0.5\%)}\end{tabular}  &
  \begin{tabular}[c]{@{}r@{}}\dr\\ \textbf{(1.0\%)}\end{tabular}  \\ \midrule
Surgical           & 100 & 20 & 25       & 0  & 1  & 2  & 4  \\
Vaccine            & 50  & 20 & 5        & 5  & 7  & 11 & 14 \\
Adult              & 50  & 20 & 5        & 10 & 13 & 14 & 16 \\
Bank Marketing     & 100 & 20 & 25       & 6  & 9  & 12 & 14 \\
Flight Delays      & 250 & 20 & 25       & 1  & 3  & 5  & 10 \\
Diabetes           & 250 & 20 & 5        & 7 & 10  & 12 & 15 \\
No Show            & 250 & 20 & 10       & 1  & 3  & 6  & 10 \\
Olympics           & 250 & 20 & 5        & 0  & 1  & 2  & 3  \\
Census             & 100 & 20 & 25       & 6  & 9  & 12 & 16 \\
Credit Card        & 250 & 20 & 5        & 5  & 8  & 14 & 17 \\
CTR                & 100 & 10 & 50       & 2  & 3  & 4  & 6  \\
Twitter            & 100 & 20 & 5        & 2  & 4  & 7  & 11 \\
Synthetic          & 50  & 20 & 10       & 0  & 2  & 3  & 5  \\
Higgs              & 50 & 20  & 10       & 1  & 3  & 6  & 9  \\
\bottomrule
\end{tabular}
\end{table}

\begin{table}[h!]
\centering
\caption{Training times~(in seconds) for the G-DaRE model using the hyperparameters selected in Table~\ref{tab:dart_hyperparameters}. Mean and standard deviations~(S.D.) are computed over five runs.}
\vskip 0.15in
\label{tab:train_times}
\begin{tabular}{lrr}
\toprule
\textbf{Dataset} & \textbf{Mean} & \textbf{S.D.} \\
\midrule
Surgical           & 5.68     & 2.97   \\
Vaccine            & 17.08    & 11.86  \\
Adult              & 6.76     & 1.17   \\
Bank Marketing     & 8.79     & 3.37   \\
Flight Delays      & 262.00   & 50.39  \\
Diabetes           & 141.91   & 39.12  \\
No Show            & 77.65    & 20.33  \\
Olympics           & 596.27   & 157.70 \\
Census             & 127.40   & 9.57   \\
Credit Card        & 616.65   & 166.00 \\
Twitter            & 152.34   & 12.32  \\
Synthetic          & 732.05   & 231.70 \\
CTR                & 121.64   & 37.13  \\
Higgs              & 5,016.44 & 146.34 \\
\bottomrule
\end{tabular}
\end{table}

\newpage
\phantom{dummy}
\newpage

\newpage

\subsection{Effect of \dr on Deletion Efficiency}
\label{appendix_subsec:d_rmax}

\newcommand{\w}{0.95}

Figure~\ref{fig:appendix_d_rmax} presents additional results on the effect \dr has on deletion efficiency for different datasets.

\begin{figure}[h!]
    \centering
    \subfloat[Surgical]{\includegraphics[width=\w\textwidth]{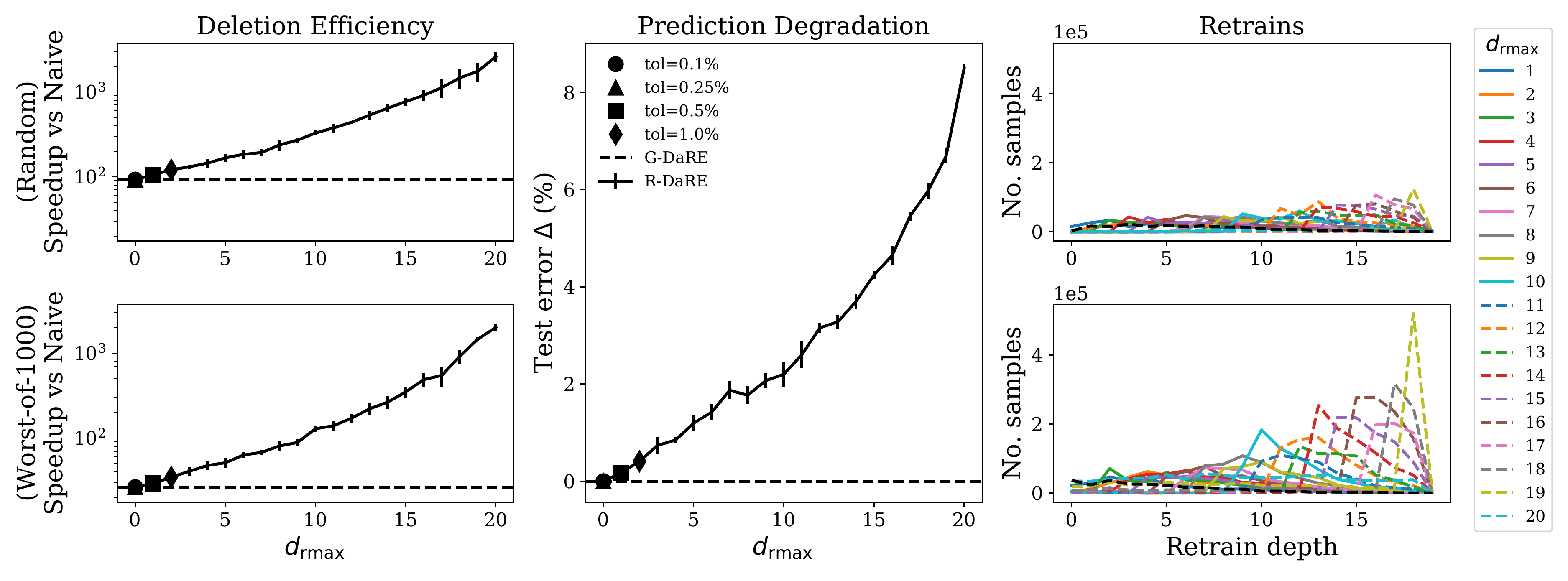}}\hfill
    \subfloat[Vaccine]{\includegraphics[width=\w\textwidth]{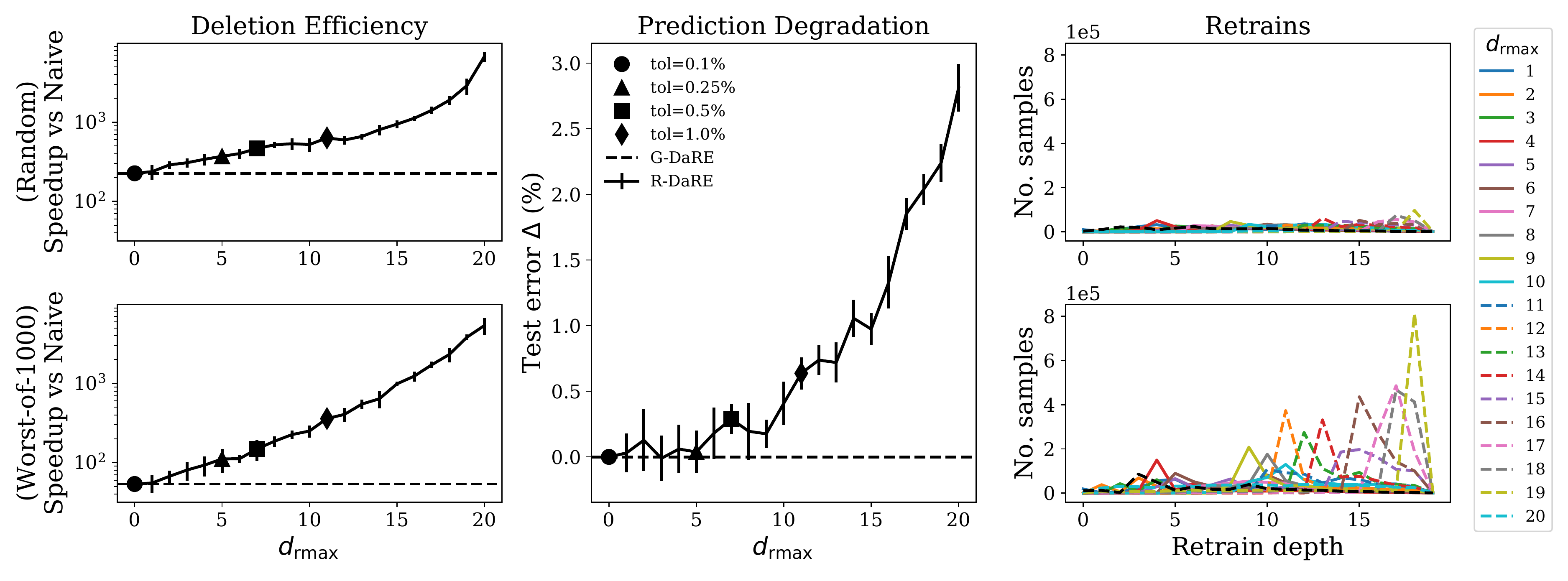}}\hfill
    \subfloat[Adult]{\includegraphics[width=\w\textwidth]{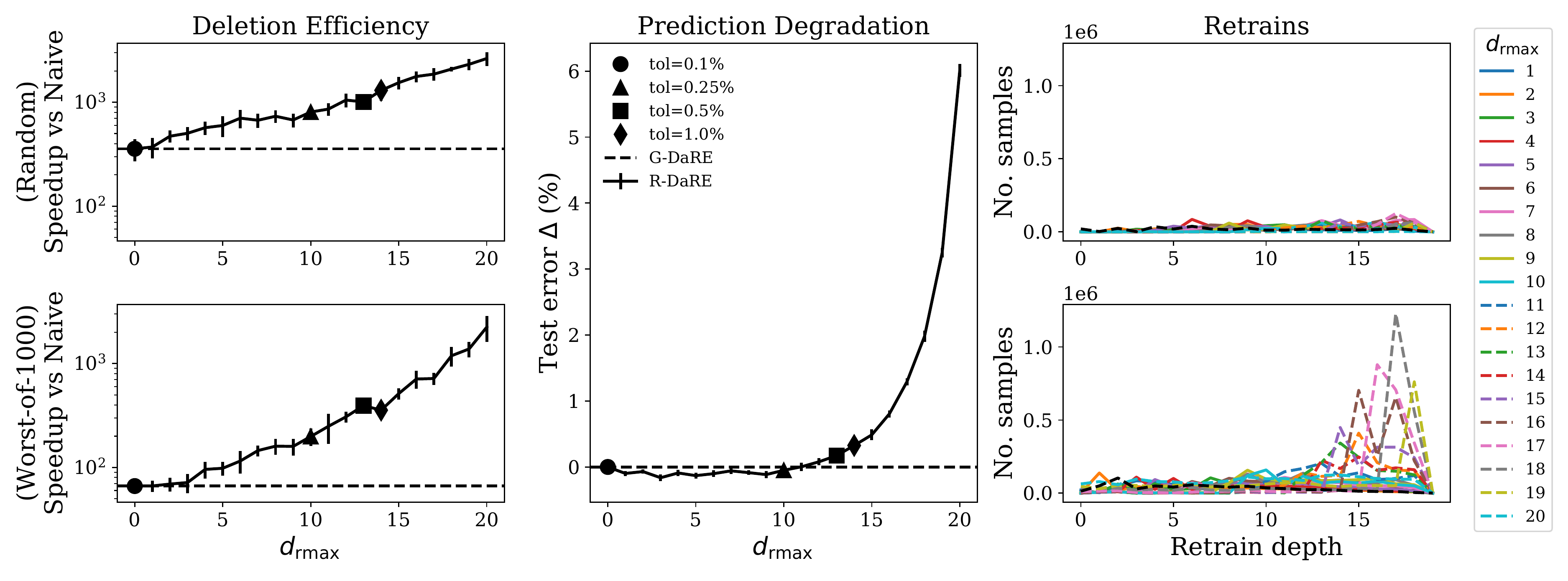}}
    \caption{Effect of \dr on deletion efficiency.}
\end{figure}

\begin{figure}
    \ContinuedFloat
    \centering
    \subfloat[Flight Delays]{\includegraphics[width=\w\textwidth]{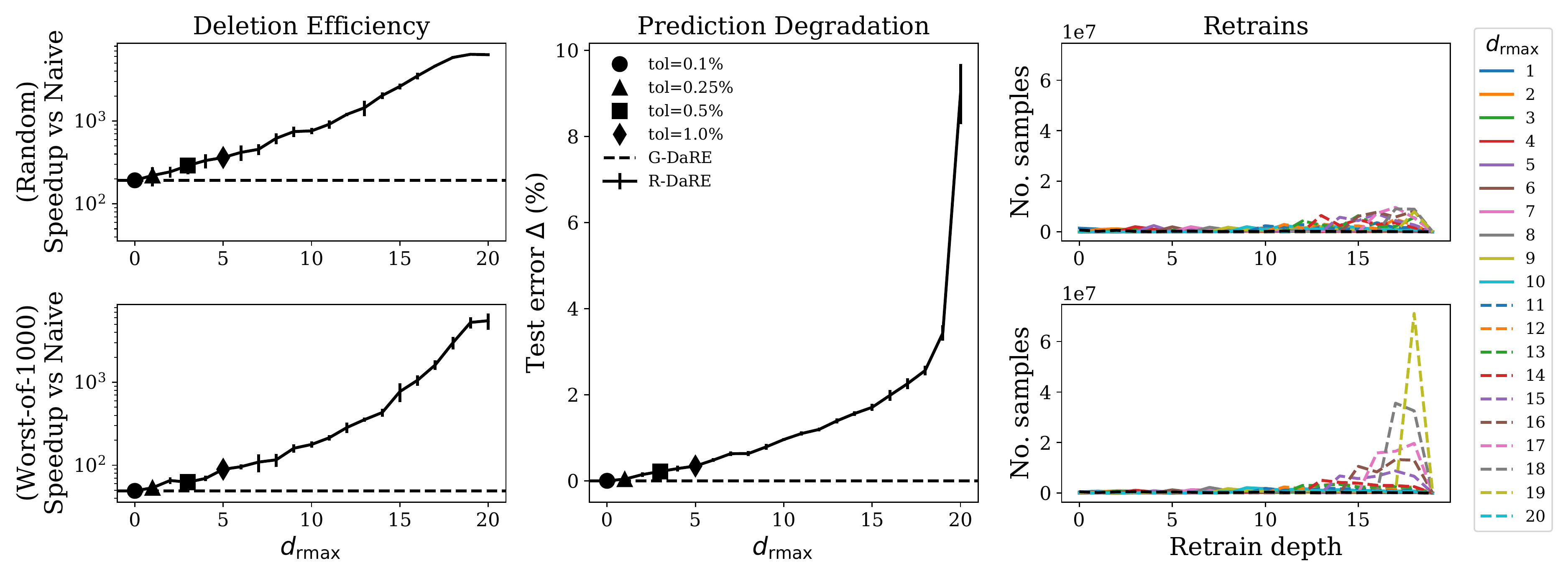}}\hfill
    \subfloat[Diabetes]{\includegraphics[width=\w\textwidth]{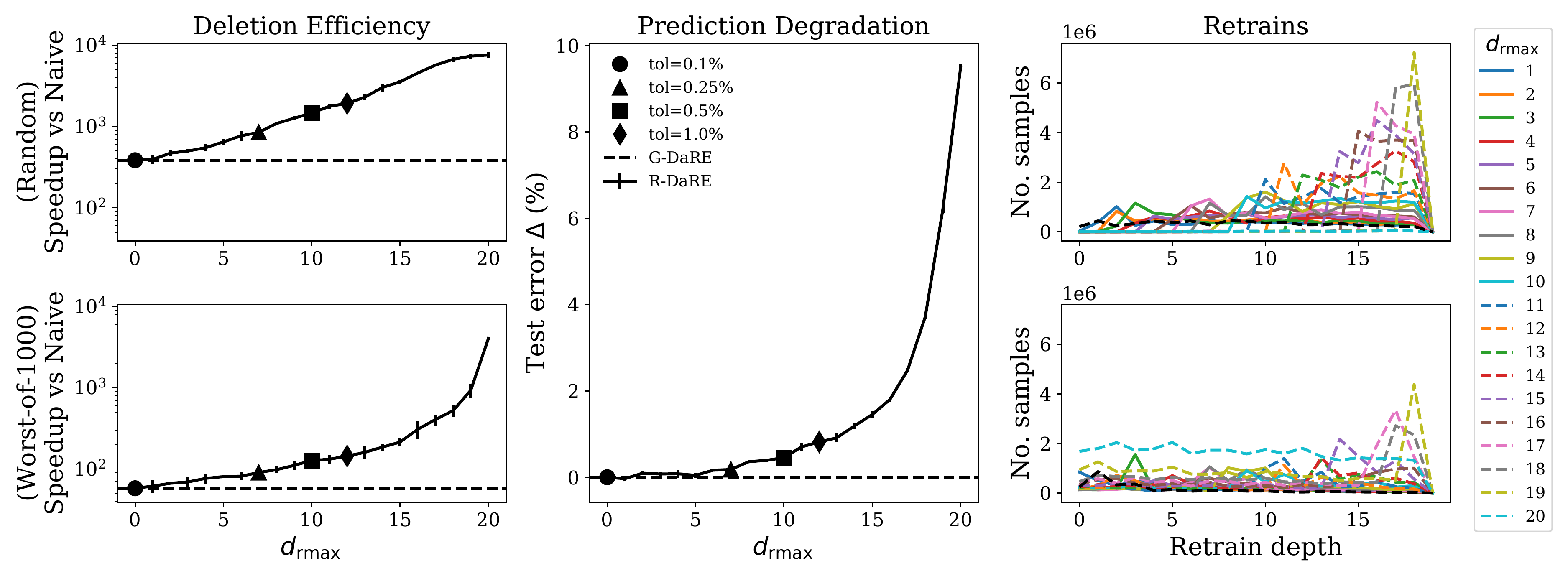}}\hfill
    \subfloat[No Show]{\includegraphics[width=\w\textwidth]{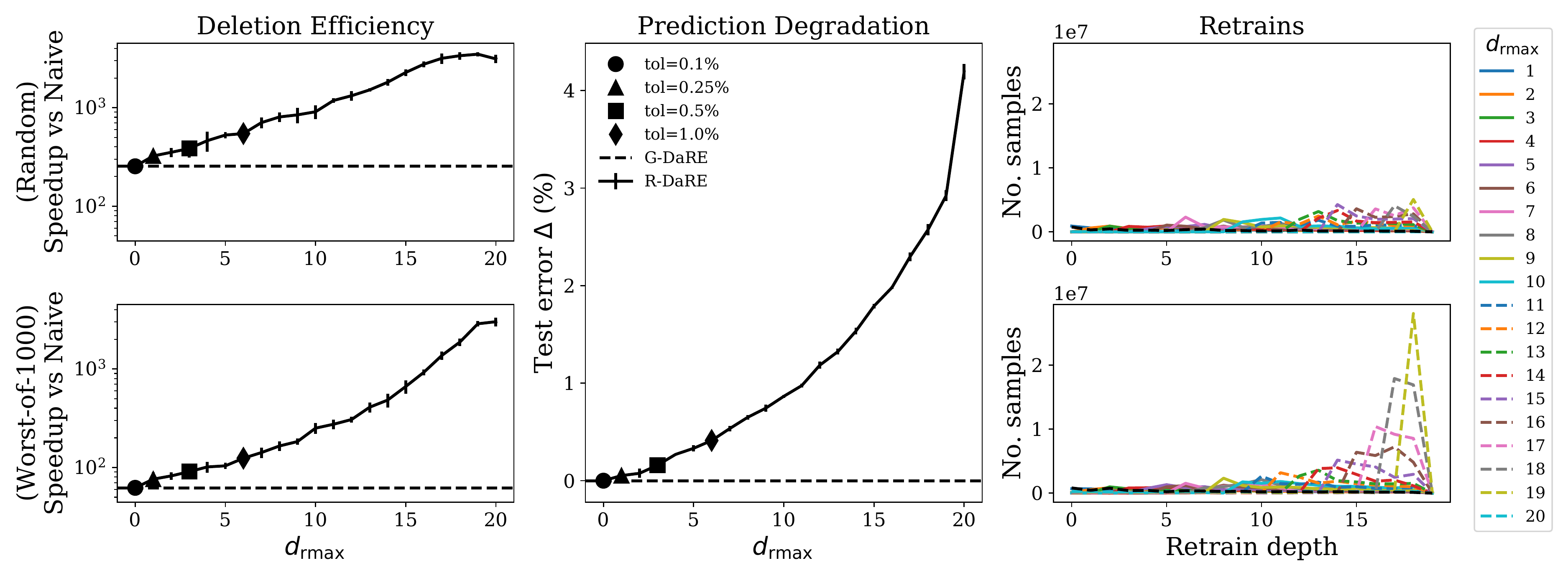}}
    \caption{Effect of \dr on deletion efficiency.}
\end{figure}

\begin{figure}
    \ContinuedFloat
    \centering
    \subfloat[Olympics]{\includegraphics[width=\w\textwidth]{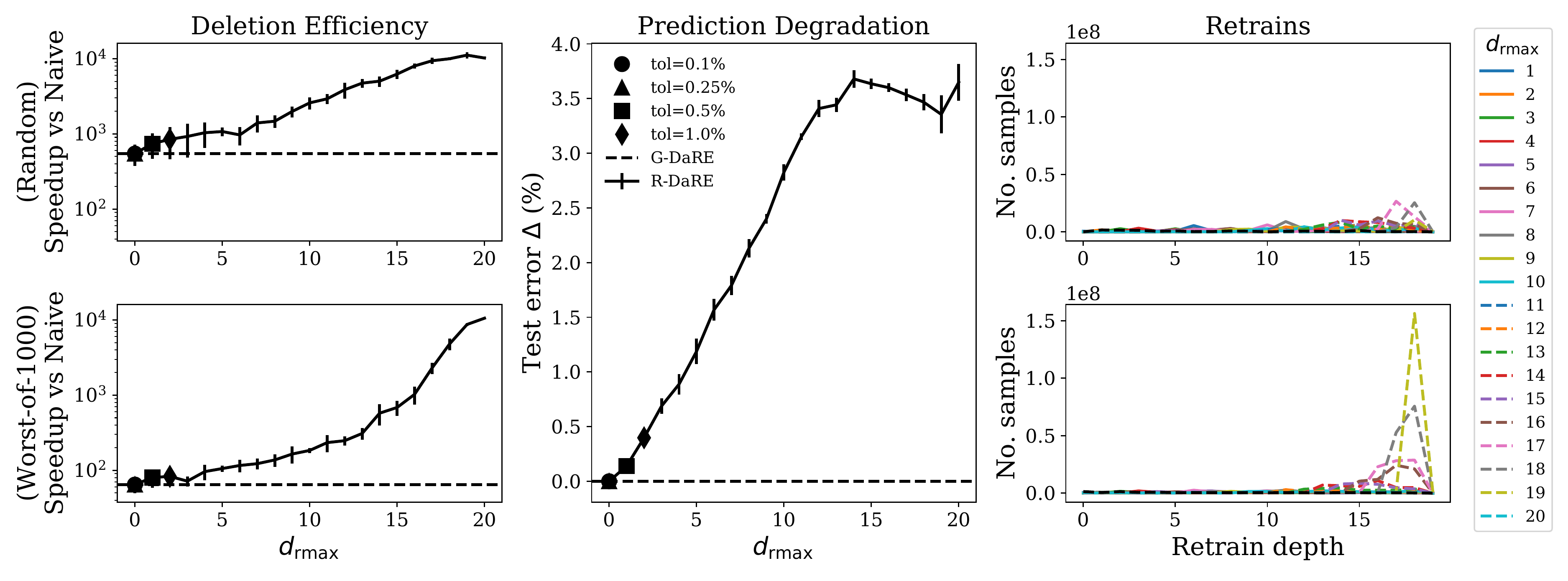}}\hfill
    \subfloat[Census]{\includegraphics[width=\w\textwidth]{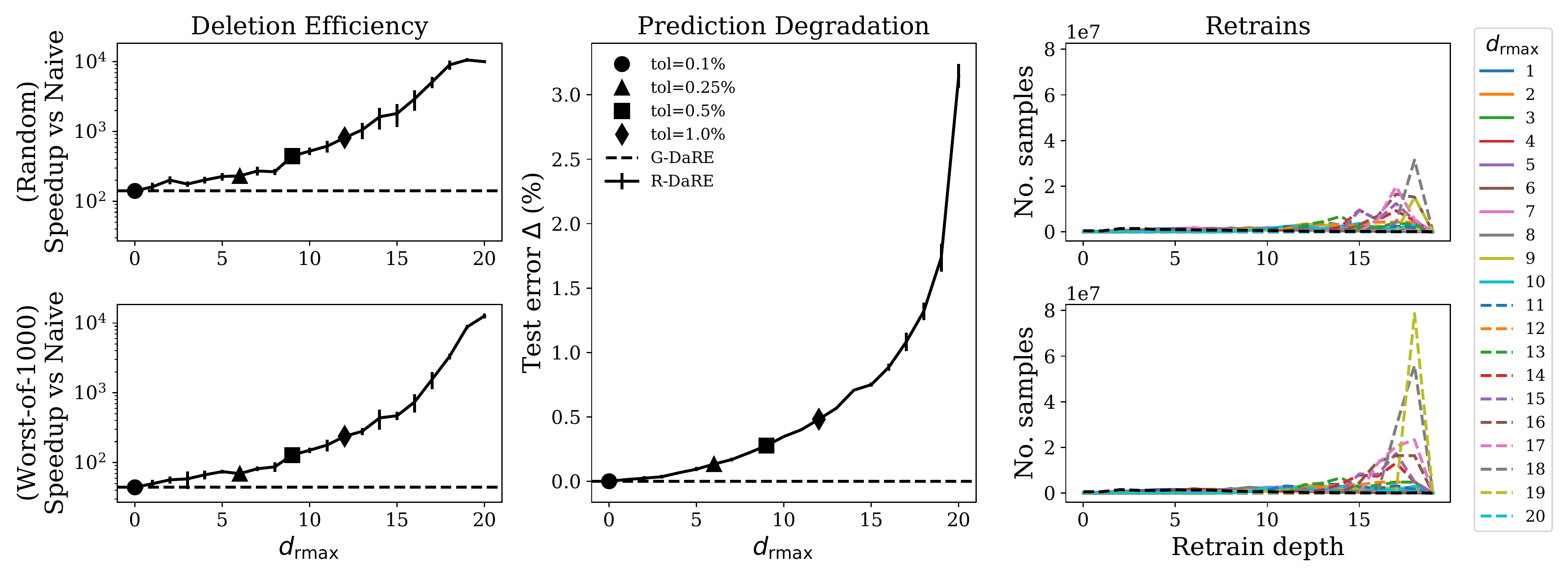}}\hfill
    \subfloat[Credit Card]{\includegraphics[width=\w\textwidth]{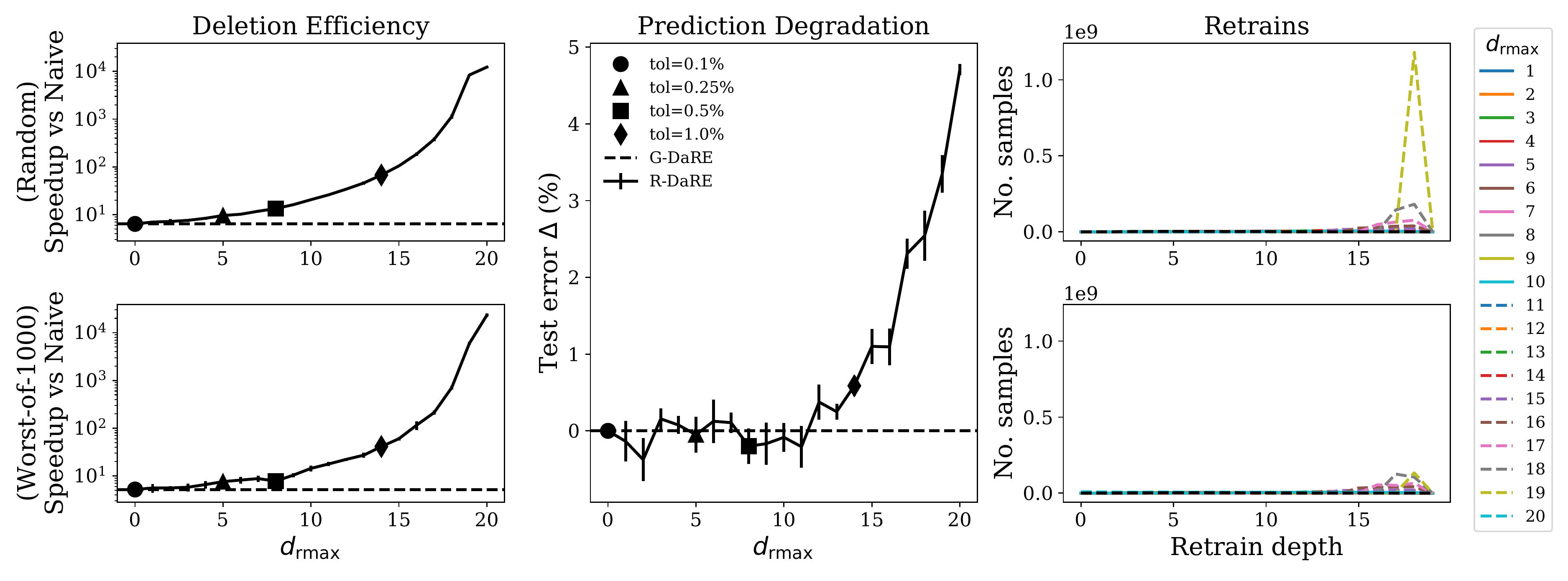}}
    \caption{Effect of \dr on deletion efficiency.}
\end{figure}

\begin{figure}
    \ContinuedFloat
    \centering
    \subfloat[CTR]{\includegraphics[width=\w\textwidth]{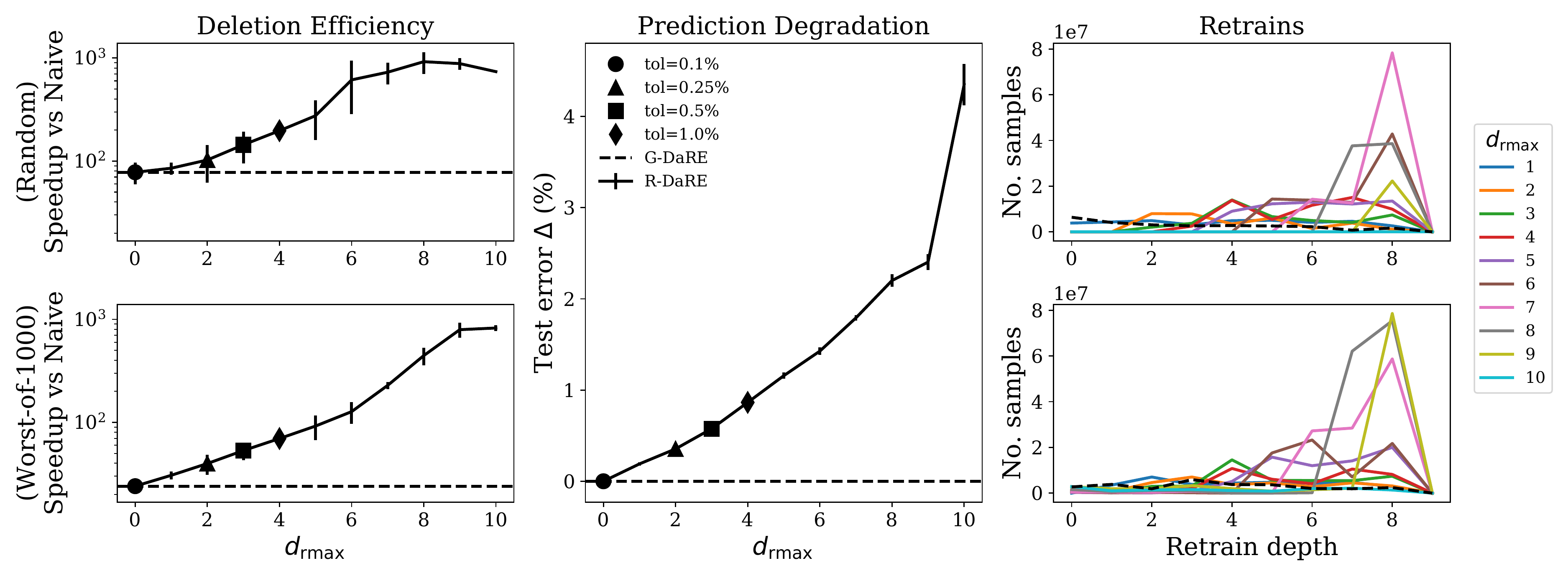}}\hfill
    \subfloat[Twitter]{\includegraphics[width=\w\textwidth]{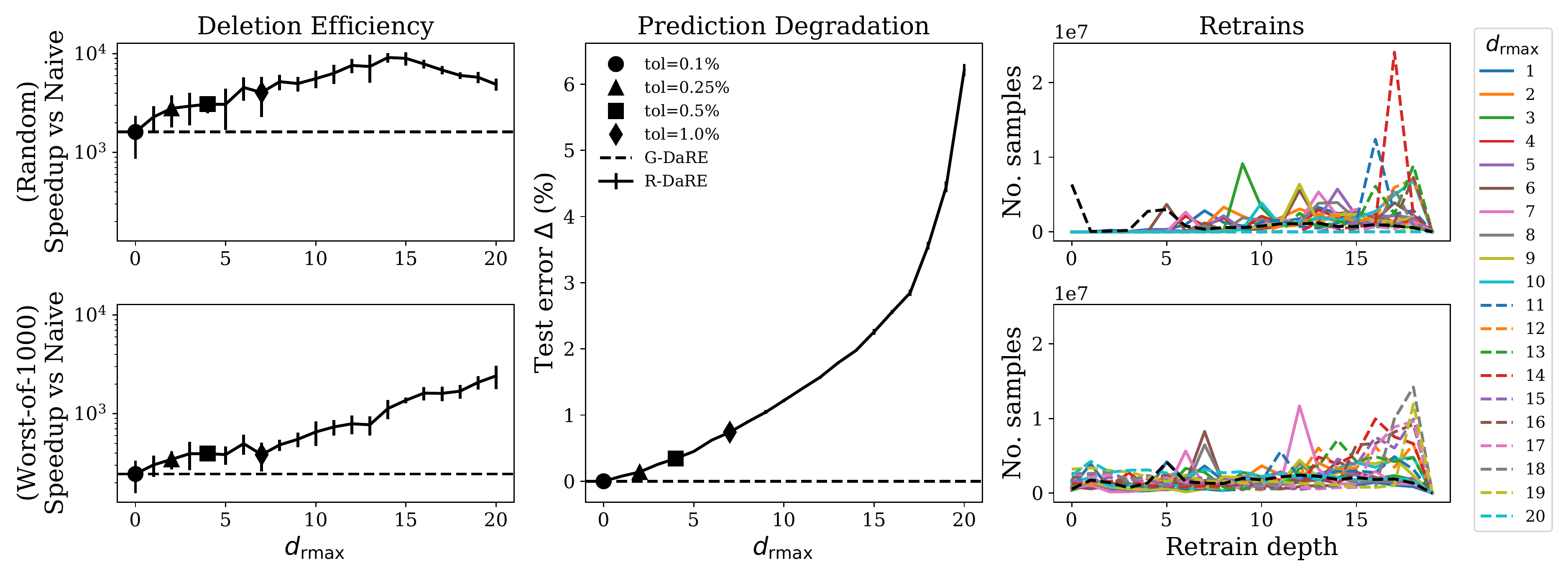}}\hfill
    \subfloat[Synthetic]{\includegraphics[width=\w\textwidth]{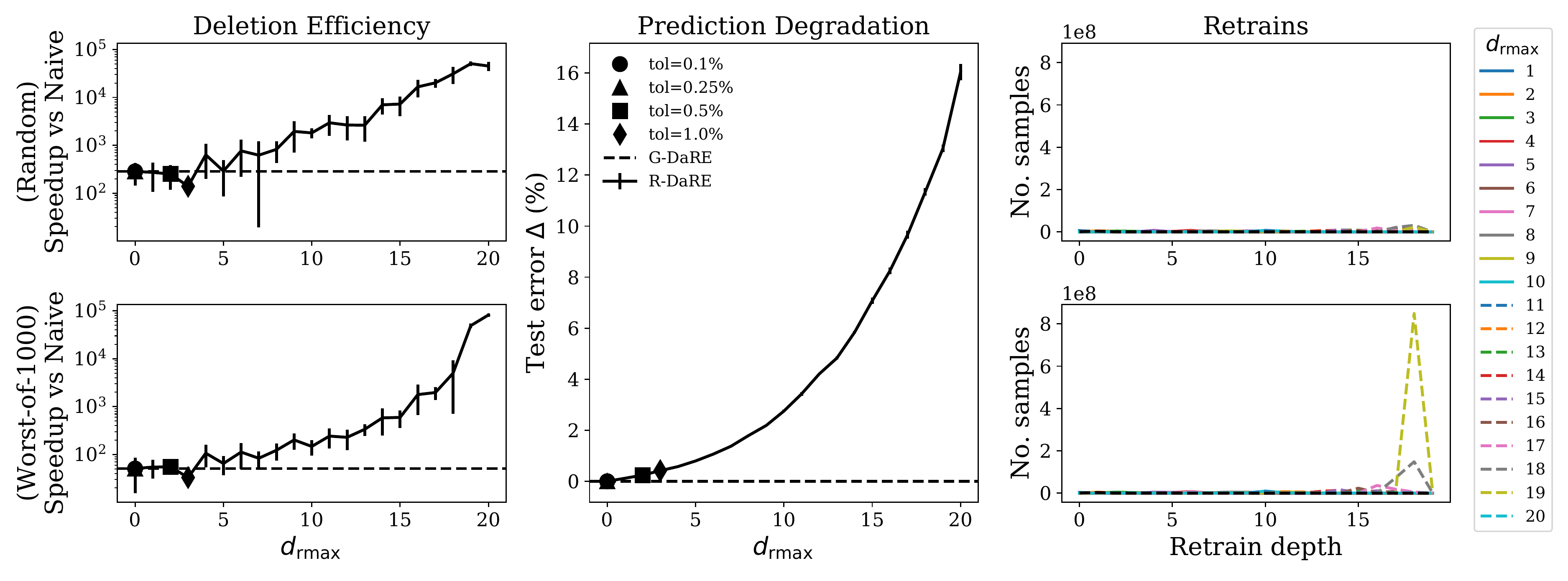}}
    \caption{Effect of \dr on deletion efficiency.}
\end{figure}

\begin{figure}
    \ContinuedFloat
    \centering
    \subfloat[Higgs]{\includegraphics[width=\w\textwidth]{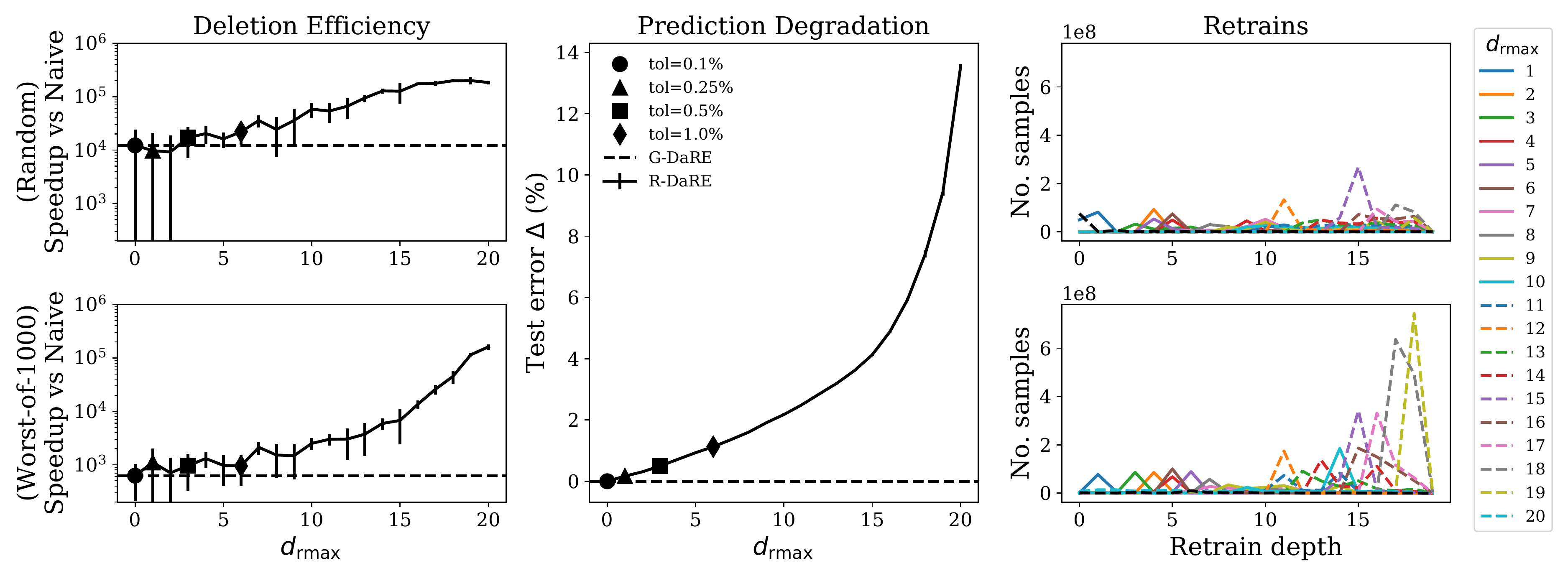}}\hfill
    \caption{Effect of \dr on deletion efficiency.}
    \label{fig:appendix_d_rmax}
\end{figure}

\newpage~\newpage

\subsection{Effect of $k$ on Deletion Efficiency}
\label{appendix_subsec:k}

\newcommand{\ww}{0.5}

Figure~\ref{fig:appendix_k} presents additional results on the effect $k$ has on deletion efficiency for different datasets. For $k$, we tested values [1, 5, 10, 25, 50, 100].

\begin{figure}[h!]
    \centering
    \subfloat[Vaccine: All attributes are binary, thus $k$ has no effect.]{\includegraphics[width=\ww\textwidth]{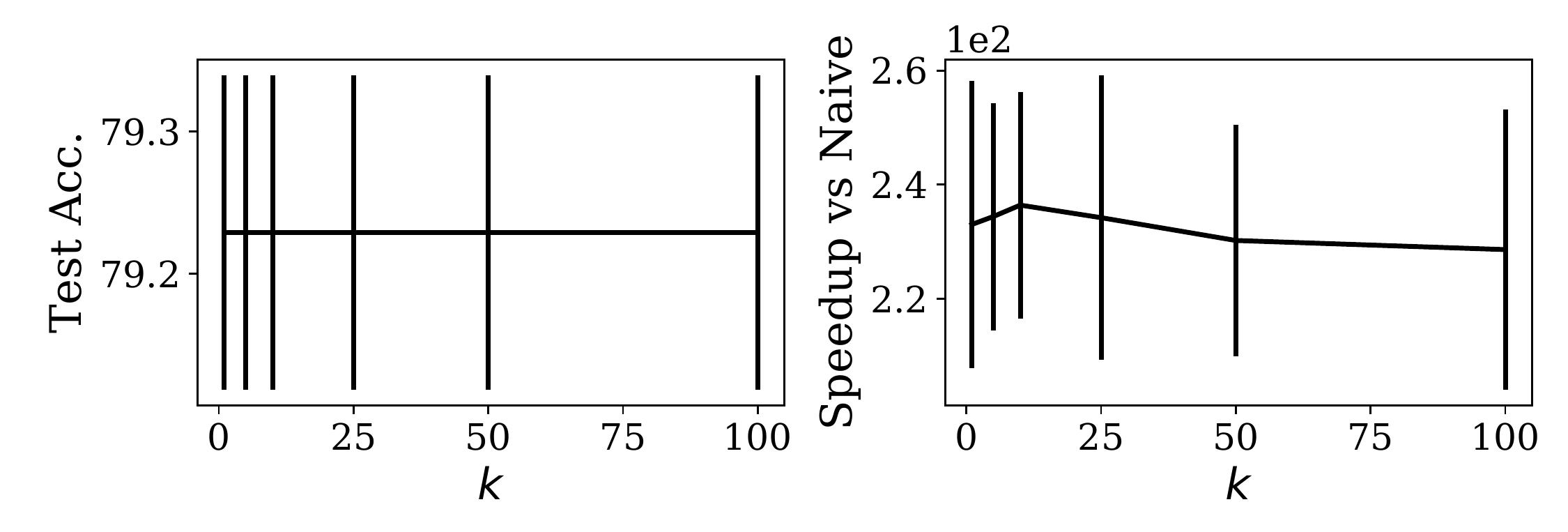}}\hfill
    \subfloat[Adult]{\includegraphics[width=\ww\textwidth]{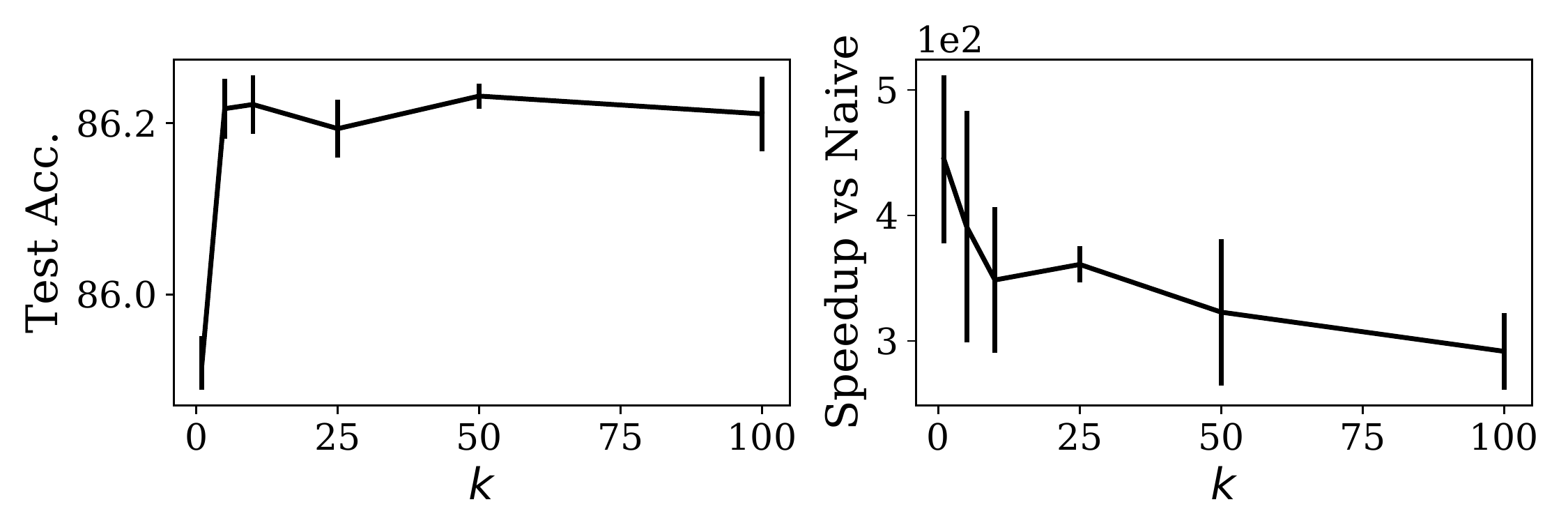}}\hfill
    \subfloat[Bank Marketing]{\includegraphics[width=\ww\textwidth]{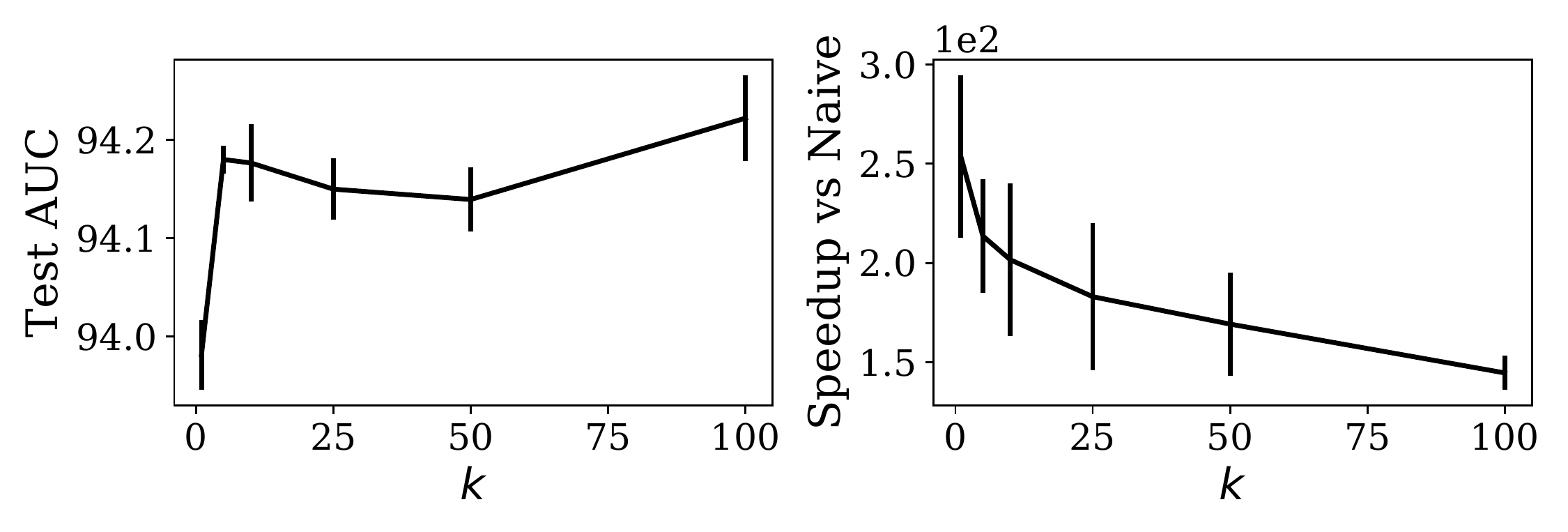}}\hfill
    \subfloat[Flight Delays]{\includegraphics[width=\ww\textwidth]{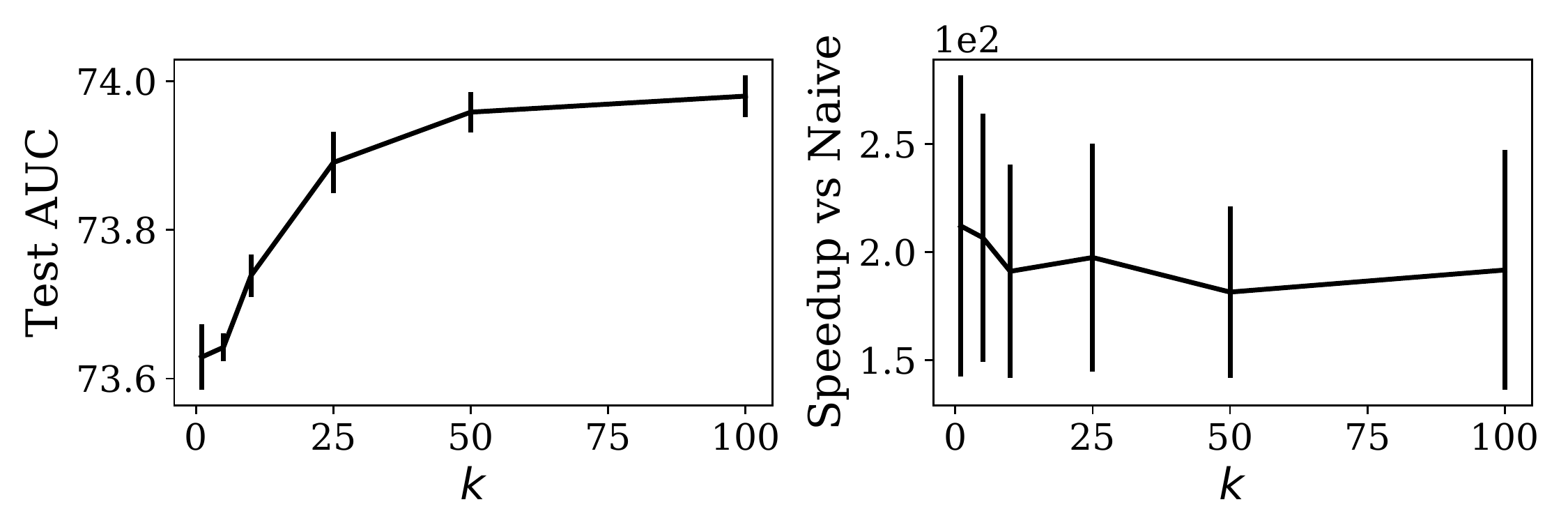}}\hfill
    \subfloat[Diabetes]{\includegraphics[width=\ww\textwidth]{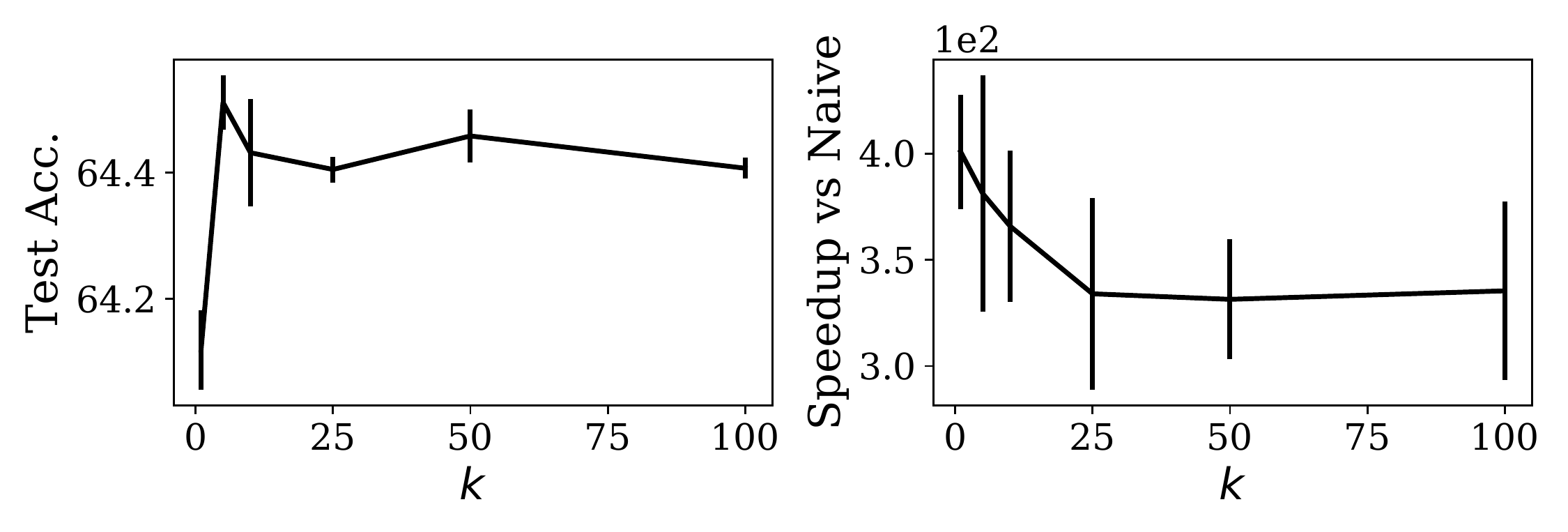}}\hfill
    \subfloat[No Show]{\includegraphics[width=\ww\textwidth]{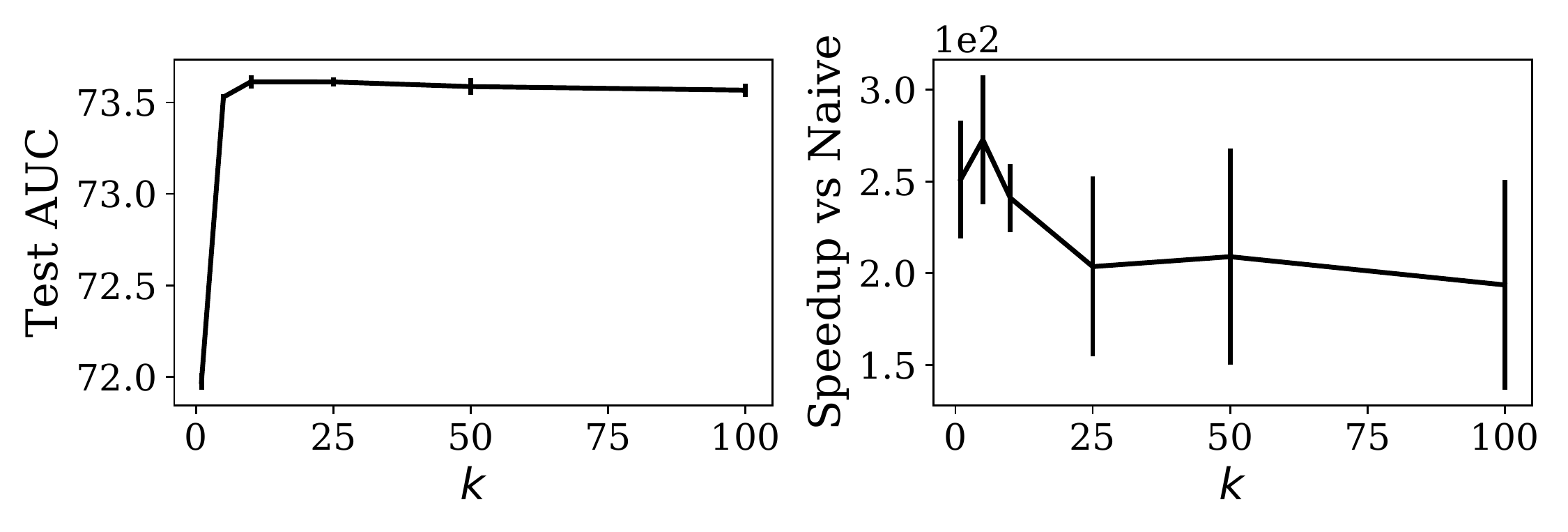}}\hfill
    \subfloat[Olympics: In this case, the randomness induced by a low $k$ value actually helps predictive performance.]{\includegraphics[width=\ww\textwidth]{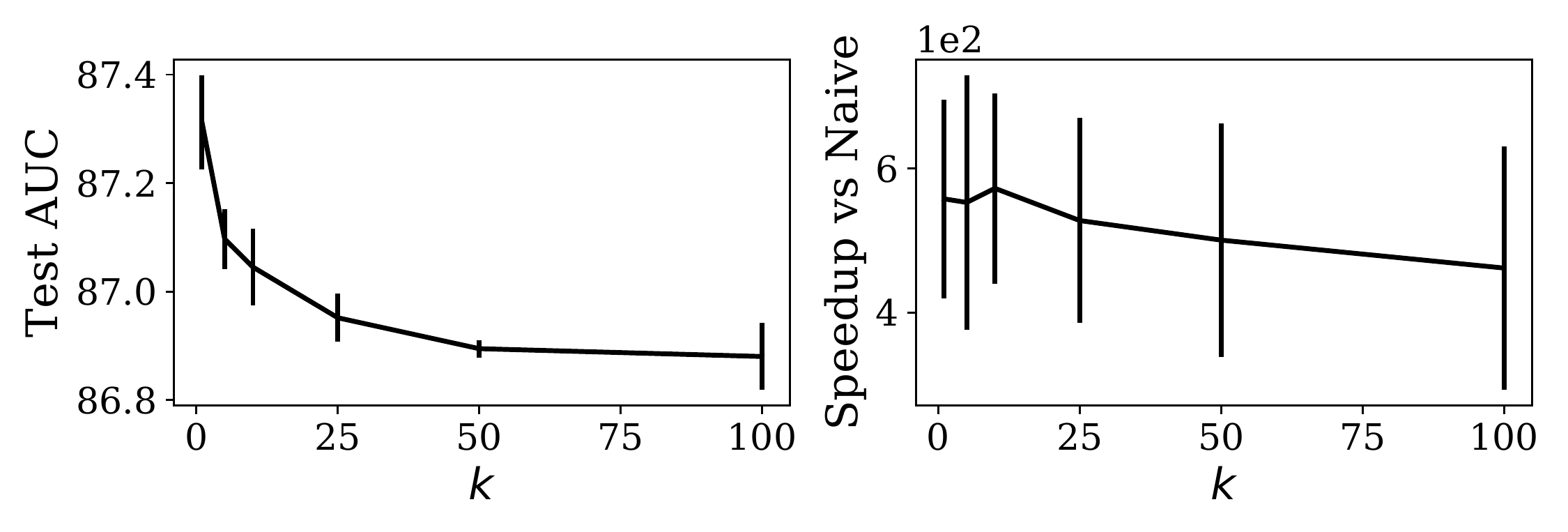}}\hfill
    \subfloat[Census]{\includegraphics[width=\ww\textwidth]{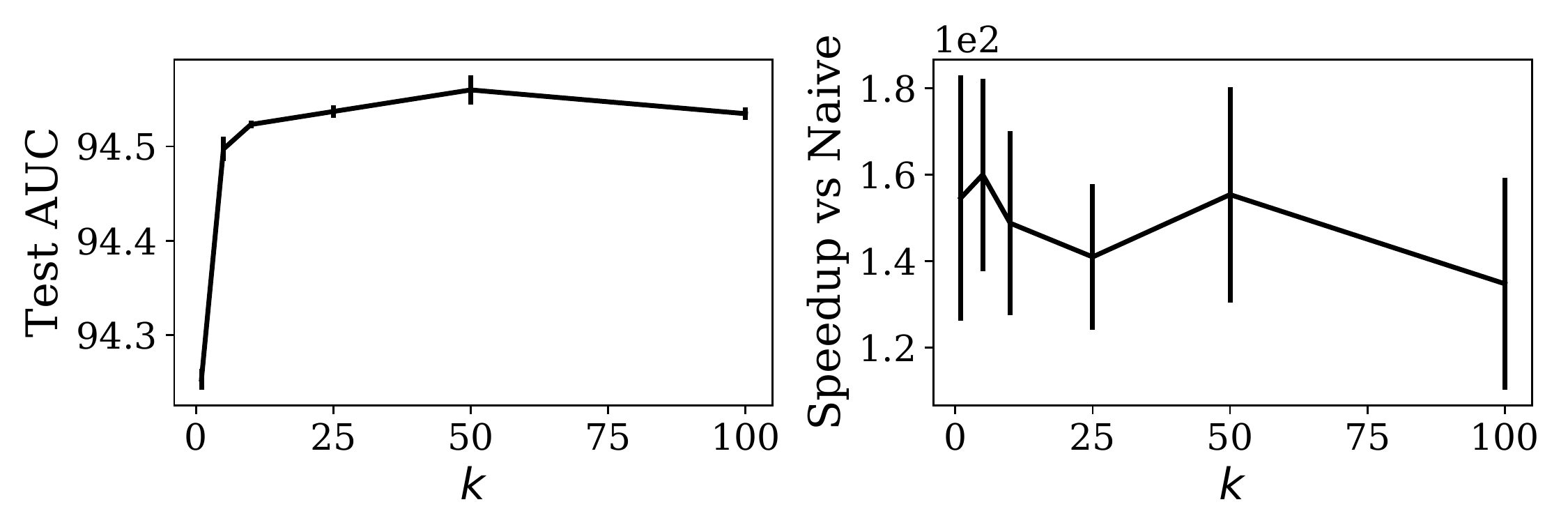}}\hfill
    \subfloat[Credit Card]{\includegraphics[width=\ww\textwidth]{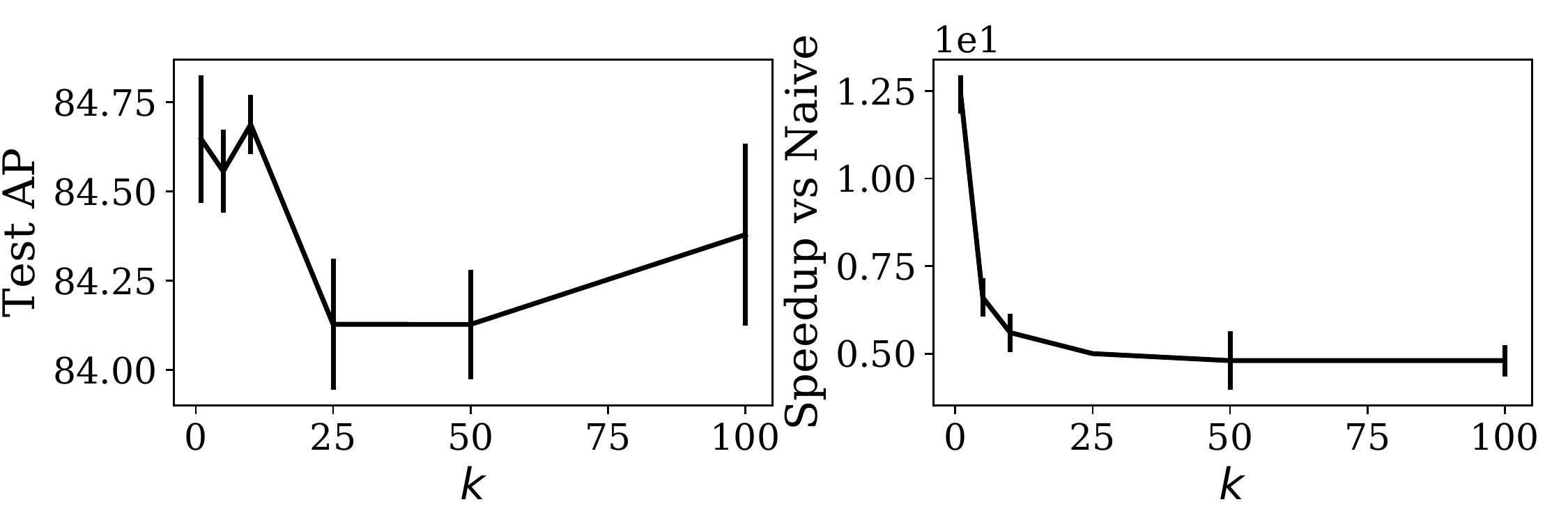}}\hfill
    \subfloat[CTR]{\includegraphics[width=\ww\textwidth]{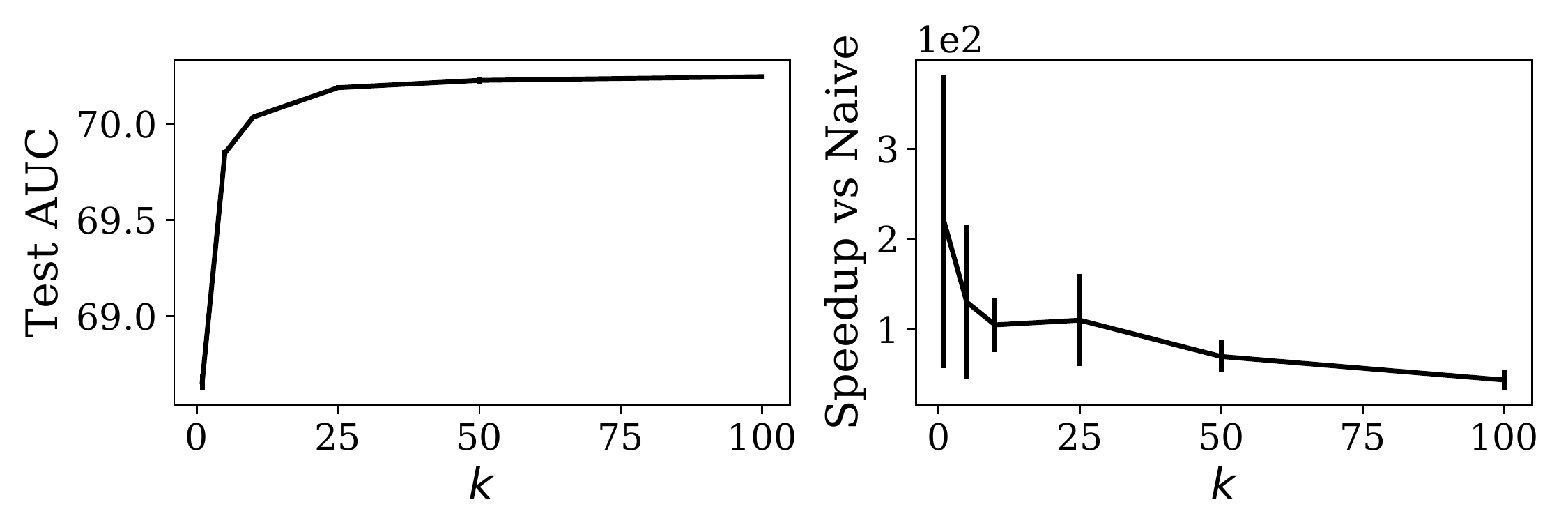}}\hfill
    \caption{Effect of $k$ on deletion efficiency.}
\end{figure}

\begin{figure}[t]
    \ContinuedFloat
    \subfloat[Twitter]{\includegraphics[width=\ww\textwidth]{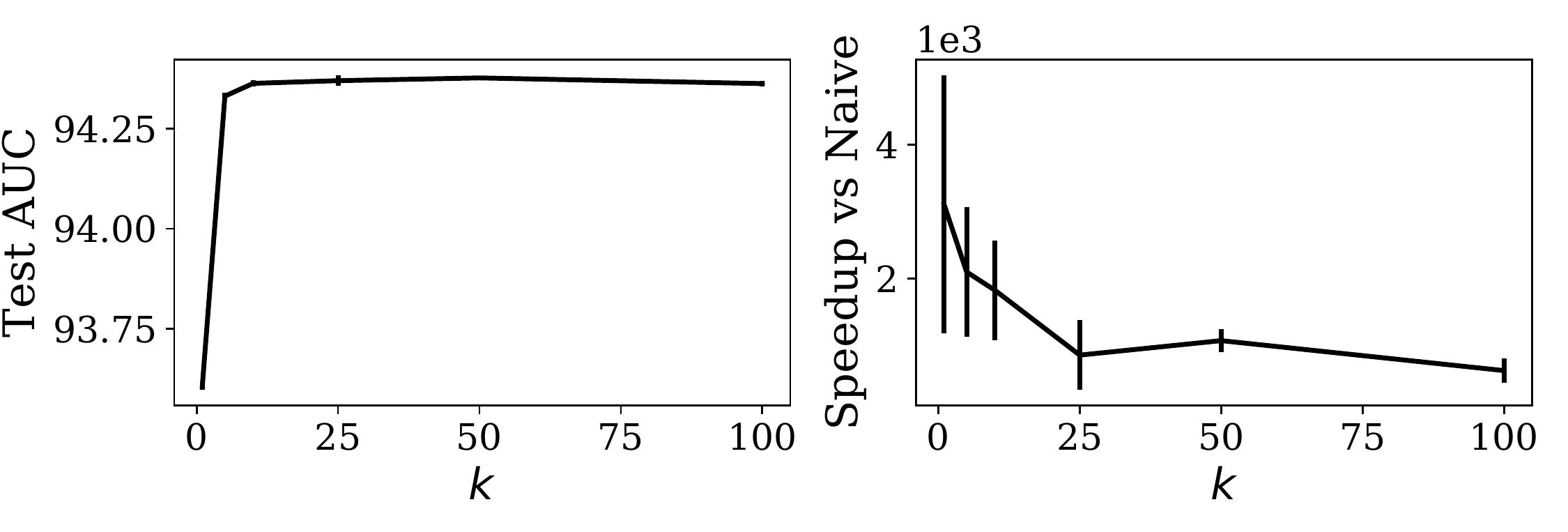}}\hfill
    \subfloat[Synthetic]{\includegraphics[width=\ww\textwidth]{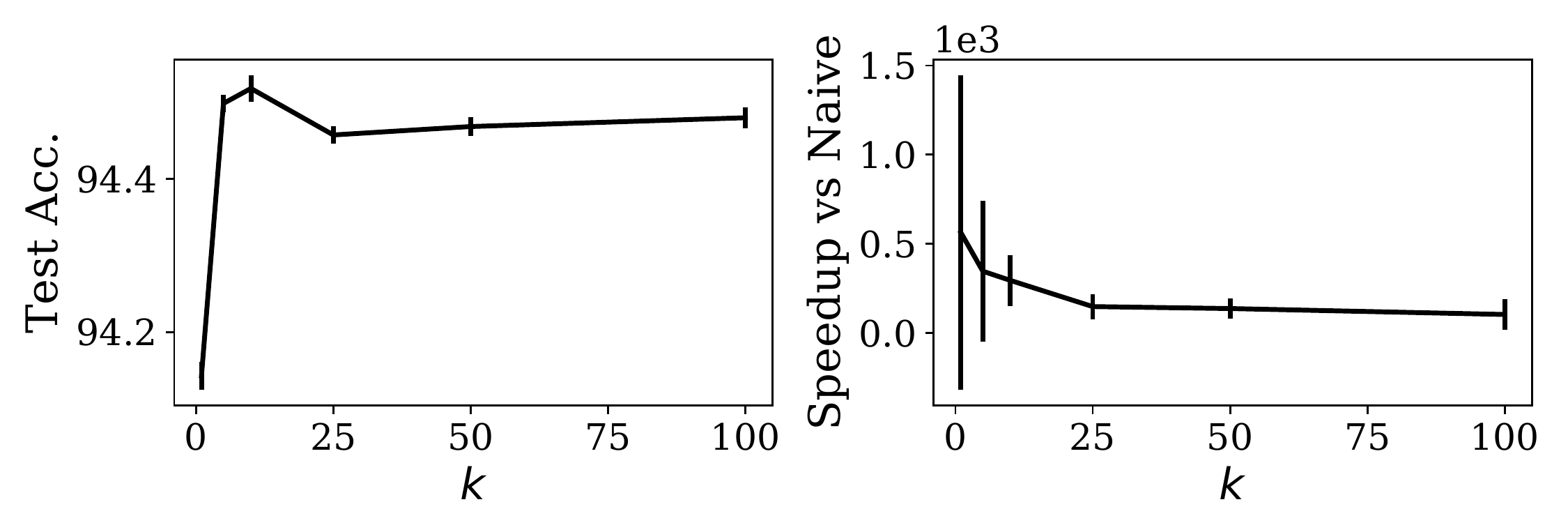}}\hfill
    \subfloat[Higgs: G-DaRE RF exceeded available memory for the settings in which $k \in \{25, 50, 100\}.$]{\includegraphics[width=\ww\textwidth]{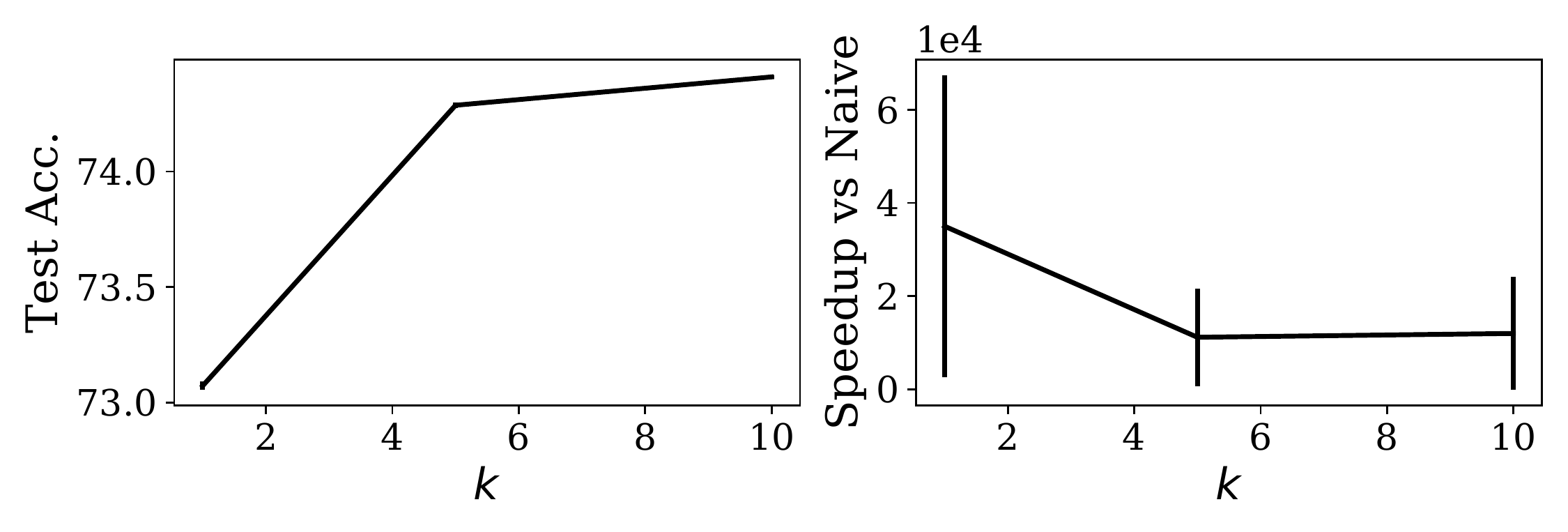}}\hfill
    \caption{Effect of $k$ on deletion efficiency.}
    \label{fig:appendix_k}
\end{figure}

\newpage~\newpage

\section{Additional Experiments}
\label{appendix_sec:additional_experiments}

\subsection{Entropy as the Split Criterion}
\label{appendix_subsec:entropy}

We repeat our experiments using entropy instead of Gini index as the split criterion and find similar results as when using Gini index. In terms of predictive performance, we find nearly identical results to those in Table~\ref{tab:predictive_performance} with selected hyperparameters shown in Table~\ref{tab:entropy_hyperparameters}. As for deletion efficiency, a summary of the deletion efficiency results is in Table~\ref{tab:entropy_deletion_efficiency_summary}; overall, we find the same trends as those in Table~\ref{tab:deletion_efficiency_summary}.

\begin{table}[h]
\centering
\caption{Hyperparameters selected using entropy as the split criterion.}
\vskip 0.15in
\label{tab:entropy_hyperparameters}
\begin{tabular}{@{}lrrrrrrr@{}}
\toprule
& \multicolumn{3}{c}{\textbf{G-DaRE} \& \textbf{R-DaRE}}
& \multicolumn{4}{c}{\textbf{R-DaRE Only}} \\
\cmidrule(lr){2-4}\cmidrule(lr){5-8}
\textbf{Dataset} &
  $T$ & $d_{\max}$ & $k$ &
  \begin{tabular}[c]{@{}r@{}}\dr\\ \textbf{(0.1\%)}\end{tabular}  &
  \begin{tabular}[c]{@{}r@{}}\dr\\ \textbf{(0.25\%)}\end{tabular} &
  \begin{tabular}[c]{@{}r@{}}\dr\\ \textbf{(0.5\%)}\end{tabular}  &
  \begin{tabular}[c]{@{}r@{}}\dr\\ \textbf{(1.0\%)}\end{tabular}  \\ \midrule
Surgical           & 100 & 20 & 50       & 1  & 1  & 2  & 4  \\
Vaccine            & 250 & 20 & 5        & 6  & 9  & 11 & 15 \\
Adult              & 50  & 20 & 5        & 9  & 12 & 14 & 15 \\
Bank Marketing     & 100 & 10 & 10       & 1  & 1  & 3  & 4  \\
Flight Delays      & 250 & 20 & 50       & 1  & 3  & 5  & 10 \\
Diabetes           & 100 & 20 & 5        & 4 & 10  & 11 & 14 \\
No Show            & 250 & 20 & 10       & 1  & 3  & 6  & 9  \\
Olympics           & 250 & 20 & 5        & 0  & 1  & 2  & 4  \\
Census             & 100 & 20 & 25       & 5  & 8  & 11 & 15 \\
Credit Card        & 250 & 10 & 25       & 1  & 2  & 3  & 4  \\
CTR                & 100 & 10 & 25       & 2  & 3  & 4  & 6  \\
Twitter            & 100 & 20 & 5        & 3  & 5  & 8  & 11 \\
Synthetic          & 50  & 20 & 10       & 1  & 2  & 3  & 6  \\
Higgs              & 50  & 20 & 10       & 0  & 2  & 5  & 8  \\
\bottomrule
\end{tabular}
\end{table}

\begin{table}[h]
\centering
\caption{Summary of the deletion efficiency results using entropy as the split criterion.}
\vskip 0.15in
\label{tab:entropy_deletion_efficiency_summary}
\begin{tabular}{lrrr}
\toprule
\textbf{Model} & \textbf{Min.} & \textbf{Max.} & \textbf{G.\ Mean} \\ \midrule
\multicolumn{4}{l}{\textbf{Random Adversary}} \\
G-DaRE               &  81x &  9,986x &   715x \\
R-DaRE (tol=0.1\%)   &  93x &  9,986x &   834x \\
R-DaRE (tol=0.25\%)  &  93x & 21,104x & 1,177x \\
R-DaRE (tol=0.5\%)   &  99x & 16,897x & 1,265x \\
R-DaRE (tol=1.0\%)   & 128x & 37,953x & 1,819x \\
\midrule
\multicolumn{4}{l}{\textbf{Worst-of-1000 Adversary}} \\
G-DaRE               & 19x &   790x &  76x \\
R-DaRE (tol=0.1\%)   & 24x &   790x & 101x \\
R-DaRE (tol=0.25\%)  & 25x & 1,348x & 135x \\
R-DaRE (tol=0.5\%)   & 29x & 1,473x & 175x \\
R-DaRE (tol=1.0\%)   & 37x & 1,783x & 262x \\ \bottomrule
\end{tabular}
\end{table}

\end{document}